\DeclareMathOperator*{\weight}{weight}
\DeclareMathOperator*{\argmin}{argmin}
\DeclareMathOperator*{\expec}{\mathbb E}
\newcommand{\grad}{\operatorname{\nabla}}
\newcommand{\prob}{{\mathbb P}}
\newcommand{\eod}{{${}$\\}}
\newcommand{\x}{{\mathbf x}}
\newcommand{\bu}{{\mathbf u}}
\newcommand{\bbf}{{\mathbf f}}
\newcommand{\bbb}{{\mathbf b}}
\newcommand{\bh}{{\mathbf h}}
\newcommand{\ba}{{\mathbf a}}
\newcommand{\y}{{\mathbf y}}
\newcommand{\bv}{{\mathbf{v}}}
\newcommand{\cP}{{\mathcal{P}}}
\newcommand{\bA}{{\mathbf{A}}}
\newcommand{\bI}{{\mathbf{I}}}
\newcommand{\bP}{{\mathbb{P}}}
\newcommand{\ccH}{{\mathcal{H}}}
\newcommand{\cC}{{\mathcal{C}}}
\newcommand{\indic}{{\mathds 1}}
\newcommand{\regret}{{\mathtt{Regret}}}
\newcommand{\proj}{{\mathtt{Proj}}}
\newcommand{\diam}{{\mathtt{diam}}}
\newcommand{\predg}{{\texttt{PS-Pred}}}
\newcommand{\nprop}{{\texttt{NProp}}}
\newcommand{\gradg}{{\texttt{PS-Grad}}}
\newcommand{\gregret}{{\mathtt{GRegret}}}
\newcommand{\pregret}{{\mathtt{PseudoRegret}}}
\newcommand*\bsigma{\ensuremath{\boldsymbol\sigma}}
\newcommand*\bvsigma{\ensuremath{\boldsymbol\varsigma}}
\newcommand*\bpi{\ensuremath{\boldsymbol\pi}}
\newcommand*\loss{\ensuremath{\boldsymbol\ell}}
\newcommand*\error{\mathcal E}
\newcommand{\subg}{\mathbf g}
\newcommand{\wt}{\mathbf{w}}
\newcommand{\Wt}{\mathbf{W}}
\newcommand{\vt}{\mathbf{v}}
\newcommand{\gain}{\mathbf{G}}
\newcommand{\bR}{{\mathbb R}}
\newcommand{\dd}{{\partial}}
\newcommand{\activ}{{\mathcal A}}
\newcommand{\fA}{{\mathfrak A}}
\newcommand{\cF}{{\mathcal F}}
\newtheorem{thm}{Theorem}
\newtheorem{cor}{Corollary}
\newtheorem{prop}{Proposition}
\newtheorem{lem}{Lemma}
\newtheorem{defn}{Definition}
\newtheorem{rem}{Remark}
\newtheorem{eg}{Example}
\newcommand{\captionfonts}{\normalsize}
\long\def\@makecaption#1#2{%
  \vskip\abovecaptionskip
  \sbox\@tempboxa{{\captionfonts #1: #2}}%
  \ifdim \wd\@tempboxa >\hsize
    {\captionfonts #1: #2\par}
  \else
    \hbox to\hsize{\hfil\box\@tempboxa\hfil}%
  \fi
  \vskip\belowcaptionskip}
\begin{document}

\title{Deep Online Convex Optimization\\ with Gated Games}

\author{David Balduzzi\\
\texttt{\small david.balduzzi@vuw.ac.nz}
}

\markboth{}%
{Balduzzi}

\IEEEtitleabstractindextext{%

\begin{abstract}
  Methods from convex optimization are widely used as building blocks for deep learning algorithms. 
  However, the reasons for their empirical success are unclear, since modern convolutional networks (convnets), incorporating rectifier units and max-pooling, are neither smooth nor convex. Standard guarantees therefore do not apply.
  This paper provides the first convergence rates for gradient descent on rectifier convnets. The proof utilizes the particular structure of rectifier networks which consists in binary active/inactive gates applied on top of an underlying linear network. The approach generalizes to max-pooling, dropout and maxout. In other words, to precisely the neural networks that perform best empirically. 
  The key step is to introduce gated games, an extension of convex games with similar convergence properties that capture the gating function of rectifiers. The main result is that rectifier convnets converge to a critical point at a rate controlled by the gated-regret of the units in the network. 
  Corollaries of the main result include: (i) a game-theoretic description of the representations learned by a neural network; (ii) a logarithmic-regret algorithm for training neural nets; and (iii) a formal setting for analyzing conditional computation in neural nets that can be applied to recently developed models of attention.
\end{abstract}

\begin{IEEEkeywords}
convex optimization, deep learning, error backpropagation, game theory, no-regret learning.
\end{IEEEkeywords}
}
\maketitle


\section{Introduction}
\label{sec:intro}

Deep learning algorithms have yielded impressive performance across a range of tasks, including object and voice recognition \cite{lecun:15}. The workhorse underlying deep learning is error backpropagation \cite{werbos:74,rumelhart:86,schmidhuber:15} -- a decades old algorithm that yields state-of-the-art performance on massive labeled datasets when combined with recent innovations such as rectifiers and dropout \cite{srivastava:14}.

Backprop is gradient descent plus the chain rule. Gradient descent has convergence guarantees in settings that are smooth or convex or both. However, \textbf{
\emph{modern convnets are neither smooth nor convex}}. Although it is well-known that convnets are not convex, it is perhaps under-emphasized that the spectacular recent results obtained by convnets on benchmarks such as ImageNet \cite{deng:09} rely on architectures that are not smooth. Starting with AlexNet in 2012, every winner of the ImageNet classification challenge has used rectifier (also known as rectilinear) activation functions \cite{krizhevsky:12,zeiler:14,szegedy:15,he:15}.

Rectifiers and max-pooling are non-smooth functions that are used in essentially all modern convnets \cite{jarrett:09,nair:10,glorot:11,maas:13,dahl:13,zeiler:13}. In fact, the representational power of rectifier nets derives precisely from their nondifferentiability: the number of nondifferentiable boundaries in the parameter space grows exponentially with depth \cite{pascanu:14}. It follows that none of the standard convergence guarantees from the optimization literature apply to modern convnets. 

In this paper, we provide the first convergence rates for convolutional networks with rectifiers and max-pooling. To do so we introduce a new class of \emph{gated games} which generalize the convex games studied by Stoltz and Lugosi in \cite{stoltz:07}. We reformulate learning in convnets as gated games and adapt results on convergence to correlated equilibria from convex to gated games.

\subsection{Open Questions in the Foundations of Deep Learning}

Theoretical questions about deep learning can be loosely grouped into four categories:
\begin{enumerate}[Q1.]
	\item \textbf{Representational power}
\end{enumerate}
The set of functions approximated by neural networks have been extensively studied. Early results show that neural networks with a single hidden layer are universal function approximators \cite{cybenko:89,leshno:93}. More recently, researchers have focused on the role of depth and rectifiers in function approximation \cite{eldan:15,telgarsky:16,daniely:16}.

\begin{enumerate}[Q2.]
	\item \textbf{Generalization guarantees}
\end{enumerate}
Standard guarantees from VC-theory apply to neural nets, although these are quite loose  \cite{vapnik:95,anthony:99}. Recent work by Hardt \emph{et al} shows that the convergence rate of stochastic gradient methods have implications for generalization bounds in both convex and nonconvex settings \cite{hardt:15}. Unfortunately, their results rely on a smoothness assumption\footnote{\label{ft:smooth}Function $f:\ccH\rightarrow\bR$ is smooth if $\forall\,\wt,\wt'\in\ccH$, $\exists\beta>0$ such that $\|\grad f(\wt)-\grad f(\wt')\|\leq \beta\|\wt-\wt'\|$. Rectifiers are not smooth for any $\beta$.} that does not hold for rectifiers or max-pooling. Thus, although suggestive, the results do not apply to modern convnets. Feng \emph{et al} have initiated a promising direction based on ensemble robustness is \cite{feng:16}, although robustness cannot be evaluated analytically.

A related problem is to better understand regularization methods such as dropout \cite{baldi:14,wager:13}. Regret-bounds for dropout have been found in the setting of prediction with expert advice \cite{vanerven:14}. However, it is unclear how to extend these results to neural nets.

\begin{enumerate}[Q3.]
	\item \textbf{Local and global minima}
\end{enumerate}
The third problem is to understand how far the critical points found by backpropagation are from local minima and the global minimum. The problem is challenging since neural networks are not convex. There has been theoretical work studying conditions under which gradient descent converges to local minima on nonconvex problems \cite{ge:15,lee:16}. The assumptions required for these results are quite strong, and include smoothness assumptions that do not hold for rectifiers. It has also been observed that saddles slow down training, even when the algorithm does not converge to a saddle point; designing algorithms that avoid saddles is an area of active research \cite{dauphin:14}.

Recent work by Choromanska \emph{et al} suggests that most local optima in neural nets have error rates that are reasonably close to the global optimum \cite{choromanska:14,choromanska:15}. Searching for good local optima may therefore be of less practical importance than ensuring rapid convergence.

\begin{enumerate}[Q4.]
	\item \textbf{Convergence rates}
\end{enumerate}
The last problem, and the focus of this paper, is to understand the convergence of gradient-based methods on neural networks. Speeding up the training time of neural nets is a problem of enormous practical importance. Although there is a large body of empirical work on optimizing neural nets, there are no theoretical guarantees that apply to the methods used to train rectifier convnets since they are neither smooth nor convex. 

Recent work has investigated the convergence of proximal methods for nonconvex nonsmooth problems \cite{bolte:14,li:15}. However computing $\text{prox}$- operators appears infeasible for neural nets. Interesting results have been derived for variance-reduced gradient optimization in the nonconvex setting \cite{allen-zhu:16,reddi:16,reddi:16a}, although smoothness is still required.

\subsection{Outline}

Training modern convnets, with rectifiers and max-pooling, entails searching over a rich subset of a universal class of function approximators with a loss function that is neither smooth nor convex. There is little hope of obtaining useful convergence results at this level of generality. It is therefore necessary to utilize the structure of rectifier networks.

Our strategy is to decompose neural nets into interacting optimizers that are easier to analyze individually than the net as a whole. In short, the strategy is to import techniques from game theory into deep learning.

\subsubsection{Utilizing network structure}
We make two observations about neural network structure. The first, section~\ref{sec:warmup}, is to reformulate linear networks as convex games where the players are the units in the network. Although the loss is not a convex function of the weights of the network; it is a convex function of the weights of the individual players. The observation connects the dynamics of games under no-regret learning to the dynamics of linear networks under backpropagation.

Linear networks are a special case. It is natural to ask whether neural networks with nonlinearities are also convex games. Unfortunately, the answer is no: introducing any nonlinearity breaks convexity.\footnote{In short, the ``nonlinearity'' $f$ would have to be affine since we need all linear combinations of $f$ to be convex, including $f(a)$ and $-f(a)$ $\forall a$.} Although the situation seems hopeless it turns out, remarkably, that game-theoretic convergence results can be imported -- despite nonconvexity -- for precisely the nonlinearities used in modern convnets.

The second observation, section~\ref{sec:relu}, is that a rectifier network is a linear network equipped with \emph{gates} that control whether units are active for a given input. If a unit is inactive during the feedforward sweep, it is also inactive during backpropagation, and therefore does not update its weights. This motivates generalizing convex games to gated games.

\subsubsection{Gated games}
In a classical game, each player chooses a series of actions and, on each round, incurs a convex loss. The \emph{regret} of a player is the difference between its cumulative loss and what the cumulative would have been had the player chosen the best action in hindsight \cite{cesa:06}. Players can be implemented with so-called no-regret algorithms that minimize their loss relative to the best action in hindsight. More precisely, a no-regret algorithm has sublinear cumulative regret. The regret per round therefore vanishes asymptotically. 

Section~\ref{sec:gt} introduces \emph{gated games} where players only incur a convex loss on rounds for which they are \emph{active}. After extending the definitions of regret and correlated equilibrium to gated games, proposition~\ref{prop:corr_eqm} shows that if players follow no-gated-regret strategies, then they converge to a correlated equilibrium. Gated players generalize the sleepy experts introduced by Blum in \cite{blum:97}, see also \cite{freund:97a}. 

A useful technical tool is \emph{path-sum games}. These are games constructed over directed acyclic graphs with weighted edges. Lemma~\ref{prop:struc} shows that path-sums encode the dynamics of the feedforward and feedback sweeps of rectifier nets. Proposition~\ref{prop:net_circ} shows that path-sum games are gated games and proposition~\ref{prop:conv} extends the result to convolutional networks.

\subsubsection{Summary of contribution}
The main contributions of the paper are as follows:
\begin{enumerate}[C1.]
	\item \textbf{Theorem~\ref{thm:critical}: Rectifier convnets converge to a critical point under backpropagation at a rate controlled by the \emph{gated-regret} of the units in the network.}
\end{enumerate}
Corollary~\ref{cor:ogd} specializes the result to gradient descent. To the best of our knowledge, there are no previous convergence rates applicable to neural nets with rectifier nonlinearities and max-pooling. Finding conditions that guarantee convergence to local minima is deferred to future work.

The results derive from a detailed analysis of the internal structure of rectifer nets and their updates under backpropagation. They require no new ideas regarding optimization in general. Our methods provide the first rigorous explanation for how methods designed for convex optimization improve convergence rates on modern convnets. The results do not apply to all neural networks: they hold for precisely the neural networks that perform best empirically \cite{krizhevsky:12,zeiler:14,szegedy:15,he:15,jarrett:09,nair:10,glorot:11,maas:13,dahl:13,zeiler:13}.

The philosophy underlying the paper is to decompose training neural nets into two distinct tasks: communication and optimization. Communication is handled by backpropagation which sends the correct gradient information to players (units) in the net. Optimization is handled locally by the individual players.  Note that although this philosophy is widely applied when designing and implementing neural nets, it has been under-utilized in the analysis of neural nets. The role of players in a convnet is encapsulated in the \emph{Gated Forecaster} setting, Section~\ref{sec:doco}. Our results provide a dictionary that translates the guarantees applicable to any no-regret algorithm into a convergence rate for the network as a whole. 

\begin{enumerate}[C2.]
	\item \textbf{Reformulate neural networks as games.}
\end{enumerate}
The primary conceptual contribution of the paper is to connect game theory to deep learning. An interesting consequence of the main result is corollary~\ref{cor:rep_learning} which provides a compact description of the weights learned by a neural network via the signal underlying correlated equilibrium. More generally, neural nets are a basic example of a game with a structured communication protocol (the path-sums) which determines how players interact \cite{balduzzi:gg}. It may be fruitful to investigate broader classes of structured games.

It has been suggested that rectifiers perform well because they are nonnegative homogeneous which has implications for regularization \cite{neyshabur:15} and robustness to changes in initialization \cite{daniely:16}. Our results provide a complementary explanation. Rectifiers simultaneously (i) introduce a nonlinearity into neural nets providing them with enormous representational power and (ii) act as gates that select subnetworks of an underlying linear neural network, so that convex methods are applicable with guarantees.

\begin{enumerate}[C3.]
	\item \textbf{Logarithmic regret algorithm.}
\end{enumerate}
As a concrete application of the gated forecaster framework, we adapt the Online Newton Step algorithm \cite{hazan:07a} to neural nets and show it has logarithmic-regret, corollary~\ref{cor:log}. The resulting algorithm approximates Newton's method locally at the level of individual units of the network -- rather than globally for the network as a whole. The local, unit-wise implementation reduces computational complexity and sidesteps the tendency of quasi-newton methods to approach saddle points.

\begin{enumerate}[C4.]
	\item \textbf{Conditional computation.}
\end{enumerate}
A secondary conceptual contribution is to introduce a framework for conditional computation. Up to this point, we assumed the gate is a fixed property of the game. Concretely, gates correspond to rectifiers and max-pooling in convolutional networks -- which are baked into the architecture before the network is exposed to data. It is natural to consider \emph{optimizing} when players in a gated game are active, section~\ref{sec:ls}. 

Recent work along these lines has applied reinforcement learning algorithms to find data-dependent dropout policies \cite{bacon:15,bengio:15e}. Conditional computation is closely related to models of attention \cite{mnih:14,gregor:15}. Slightly further afield, long-short term memories (LSTMs) and Gated Recurrent Units (GRUs) use complicated sets of sigmoid-gates to control activity within memory cells \cite{hochreiter:97,cho:14}. Unfortunately the resulting architectures are difficult to analyze; see \cite{balduzzi:16} for a principled simplification of recurrent neural network architectures motivated by similar considerations to the present paper. 
 
As a first step towards analyzing conditional computation in neural nets, we introduce the \emph{Conditional Gate (CoG)} setting. CoGs are contextual bandits or contextual partial monitors that optimize when sets of units are active. CoGs are a second class of players that can be introduced into in neural games, and may provide a useful tool when designing deep learning algorithms.

\subsubsection{Caveat}
Neural nets are typically trained on minibatches sampled i.i.d. from a dataset. In contrast, the analysis below provides guarantees in adversarial settings. Our results are therefore conservative. Extending the analysis to take advantage of stochastic settings is an important open problem. However, it is worth mentioning that neural nets are increasingly applied to data that is \emph{not} i.i.d. sampled. For example, adversarially trained generative networks have achieved impressive performance \cite{goodfellow:14,denton:15}. Similarly, there has been spectacular progress applying neural nets to reinforcement learning \cite{Mnih:2015wq,silver:16}.

Activity within a neural network is not i.i.d. even when the inputs are, a phenomenon known as internal covariate shift \cite{ioffe:15}. Two relevant developments are batch-normalization \cite{ioffe:15} and optimistic mirror descent \cite{daskalakis:14,rakhlin:13,syrgkanis:15}. Batch normalization significantly reduces the training time of neural nets by actively reducing internal covariate shift. Optimistic mirror descent takes advantage of the fact that all players in a game are implementing no-regret learning to speed up convergence.  It is interesting to investigate  whether reducing internal covariate shift can be understood game-theoretically, and whether optimistic learning algorithms can be adapted to neural networks.

\subsection{Related work}

A number of papers have brought techniques from convex optimization into the analysis of neural networks. A line of work initiated by Bengio in \cite{bengio:06a} shows that allowing the learning algorithm to choose the number of hidden units can convert neural network optimization in a convex problem, see also \cite{bach:14}. A convex multi-layer architecture is developed in \cite{aslan:13,aslan:14}. Although these methods are interesting, they have not achieved the practical success of convnets. In this paper, we analyze convnets \emph{as they are} rather than proposing a more tractable, but potentially less useful, model.

Game theory was developed to model interactions between humans \cite{vonneumann:44}. However, it may be more directly applicable as a toolbox for analyzing \emph{machina economicus} -- that is, interacting populations of algorithms that are optimizing objective functions \cite{parkes:15}. We go one step further, and develop a game-theoretic analysis of the internal structure of backpropagation. 

The idea of decomposing deep learning algorithms into cooperating modules dates back to at least the work of Bottou \cite{bottou:91}. A related line of work modeling biological neural networks from a game-theoretic perspective can be found in \cite{bb:12,balduzzi:13mv,balduzzi:14cpm,bvb:15}.

\section{Warmup: Linear Networks}
\label{sec:warmup}

The paper combines disparate techniques and notation from game theory, convex optimization and deep learning, and is therefore somewhat dense. To get oriented, we start with linear neural networks. Linear nets provide a simple but nontrivial worked example. They are not convex. Their energy landscapes and dynamics under backpropagation have been extensively studied and turn out to be surprisingly intricate \cite{baldi:88,saxe:14}.

\subsection{Neural networks}
\label{sec:nn}

Consider a neural network with $L-1$ hidden layers. Let $\bh_0:=\x$ denote the input to the network. For each layer $l\in\{1,\ldots,L\}$, set $\ba_l=\Wt_l\cdot \bh_{l-1}$ and $\bh_l= s(\ba_l)$ where $s$ is a (typically nonlinear) function applied coordinatewise. Let $\Wt:=\{\Wt_1,\ldots,\Wt_L\}$ denote the set of weight matrices. A convenient shorthand for the output of the network is $f_{\Wt}(\x)$.

For simplicity, suppose the output layer consists in a single unit and the output is a scalar (the assumption is dropped in section~\ref{sec:gt}). Let $(\x^{(i)},y^{(i)})_{i=1}^n$ denote a sample of labeled data and let $\ell(f,y)$ be a loss function that is convex in the first argument. Training the neural network reduces to solving the optimization problem 
\begin{equation}
  \label{eq:opt}
  \Wt^* = \argmin_\Wt \expec_{(\x,y)\sim \hat{\bP}}\Big[\ell \big(f_\Wt(\x),y\big)\Big],
\end{equation}
where $\hat{\bP}$ is the empirical distribution over the data. Training is typically performed using gradient descent
\begin{equation}
  \label{eq:gd}
  \Wt\leftarrow 
  \Wt - \expec_{(\x,y)\sim \hat{\bP}}\Big[\eta\cdot \grad_\Wt\ell \big(f_\Wt(\x),y\big)\Big].
\end{equation}
Since Eq.~\eqref{eq:opt} is not a convex function of $\Wt$, there are no guarantees that gradient descent will find the global minimum. 

\subsubsection{Backprop}
Let us recall how backprop is extracted from gradient descent. Subscripts now refer to units, not layers. Setting $\error:=\expec[\ell]$, Eq.~\eqref{eq:gd} can be written more concretely for a single weight as
\begin{equation}
  w_{ij} \leftarrow w_{ij} - \eta\cdot\frac{\dd\error}{\dd w_{ij}}.
\end{equation}
By the chain rule the derivative decomposes as $\frac{\dd\error}{\dd w_{ij}} = \frac{\dd \error}{\dd a_j}\frac{\dd a_j}{\dd w_{ij}}$ where $\delta_j = \frac{\dd \error}{\dd a_j}$ is the backpropagated error. Backprop computes $\delta_j$ recursively via
\begin{equation}
  \label{eq:bp}
  \underbrace{\frac{\dd \error}{\dd a_j}}_{\delta_j}
  = \underbrace{\sum_{\{k:j\rightarrow k\}}\overbrace{\frac{\dd \error}{\dd a_k}}^{\delta_k}\cdot \frac{\dd a_k}{\dd a_j}}_{\delta_j}
  = \underbrace{\sum_{\{k:j\rightarrow k\}}\delta_k\cdot w_{jk}h'_j}_{\delta_j}
\end{equation}

\subsection{Linear networks}
\label{sec:lin}

In a linear network, the function $s$ is the identity. It follows that the output of the network is
\begin{equation}
	f(\x) = f_{\Wt}(\x) 
  = \left(\prod_{l=1}^L\Wt_l\right)\cdot \x.
\end{equation}
For linear networks, the backpropagation computation in Eq.~\eqref{eq:bp} reduces to 
\begin{equation}
  \label{eq:bp_lin}
  \delta_j  
  = \sum_{\{k:j\rightarrow k\}}\delta_k\cdot w_{jk}
\end{equation}
It is convenient for our purposes to decompose $\delta_j$ slightly differently, by factorizing it into $\frac{\dd\error}{\dd f}$, the derivative of the loss with respect to the output, and $\frac{\dd f}{\dd a_j}$, the sensitivity of the network's output to unit $j$:
\begin{equation}
  \label{eq:bp_sens}
	\delta_j = \frac{\dd\error}{\dd a_{j}} 
	= \underbrace{\frac{\dd\error}{\dd f}\cdot \frac{\dd f}{\dd a_j}}_{\delta_j}
\end{equation}
We now reformulate the forward- and back- propagation in linear nets in terms of path-sums \cite{choromanska:14,bvb:15}:
\begin{defn}[path-sums in linear nets]\eod
  A \textbf{path} is a directed-sequence of edges connecting two units. Let $(i\leadsto j)$ denote the set of paths from unit $i$ to unit $j$. The \textbf{weight} of a path $\weight(\rho)$ is the product of the weights of the edges along the path.
  \begin{itemize}
    \item \textbf{sum(paths from $i$ to $j$):}
    \begin{equation}
      \sigma_{i \leadsto j}
    := \sum_{\rho \in (i\leadsto j)}\weight(\rho) 
    \end{equation}
    \item \textbf{sum(paths from $in$ to $j$):}
    \begin{equation}
      \sigma_{\bullet\leadsto j} 
     := \sum_{s\in in} x_s \cdot 
    \left(\sum_{\rho \in (s\leadsto j)}\weight(\rho)\right) 
    \end{equation}
    \item \textbf{sum(paths from $j$ to $out$):}
    \begin{equation}
      \sigma_{j\leadsto\bullet}
    := \sum_{\rho\in (j\leadsto \bullet)}\weight(\rho)
    \end{equation}
    \item \textbf{sum(paths avoiding $j$):}
    \begin{equation}
      \sigma_{- j} 
    := \sum_{s\in in}x_s 
    \cdot\left(\sum_{\left\{\rho |\rho \in (s\leadsto \bullet)\text{ and } j\not\in \rho\right\}}\weight(\rho)\right)
    \end{equation}
  \end{itemize}
\end{defn}

\begin{prop}[structure of linear nets]\label{prop:lin}\eod
  Let $\bsigma_{in(j)} = (\sigma_{\bullet\leadsto i})_{\{i:i\rightarrow j\}}$. 
  For a linear network as above,
  \begin{enumerate}
    \item \textbf{Feedforward computation of outputs.}\\
    The output of unit $j$ is $a_j = \sigma_{\bullet\leadsto j} = \langle\wt_j,\bsigma_{\text{in}(j)}\rangle$.
    
    \item \textbf{Sensitivity of network output.}\\
    The sensitivity of the network's output to unit $j$, denoted $\frac{\dd f_\Wt}{\dd a_j}$, is the sum of the weights of all paths from unit $j$ to the output unit:
    \begin{equation}
      \frac{\dd f_\Wt}{\dd a_j} = \sigma_{j\leadsto\bullet}
    \end{equation}

    \item  \textbf{Decomposition of network output.}\\
    The output of a linear network decomposes, with respect unit $j$, as
    \begin{equation}
      \label{eq:decomp}
      f_\Wt(\x) 
      = \langle\wt_j ,\bsigma_{in(j)}\rangle \cdot \sigma_{j\leadsto \bullet}
      + \sigma_{-j}    
      = a_j \cdot \sigma_{j\leadsto \bullet} + \sigma_{-j}    
    \end{equation}

    \item \textbf{Backpropagated errors.}\\
    Let $\beta := \frac{\dd \error}{\dd f}$ denote the derivative of $\error$ with respect to the output of the network. The backpropagated error signal received by unit $j$ is
    $\delta_j = \beta\cdot \sigma_{j\leadsto\text{out}}$.

    \item \textbf{Error gradients.}\\
    Finally,
      \begin{equation}
        \grad_{\wt_j}\error = \delta_j \cdot \bsigma_{in(j)} 
        = \beta\cdot \sigma_{j\leadsto \bullet}\cdot \bsigma_{in(j)}
      \end{equation}
  \end{enumerate}
\end{prop}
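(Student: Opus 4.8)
The plan is to derive all five items from one structural fact: in a linear network every activation is a path-sum from the inputs. Concretely, I would first prove item 1 by induction on a topological ordering of the units. For an input unit the claim is the unique length-zero path, $a_s = x_s = \sigma_{\bullet\leadsto s}$. For a non-input unit $j$, since $s$ is the identity we have $a_j=\sum_{\{i:i\to j\}} w_{ij}\,a_i$, and by the inductive hypothesis $a_i=\sigma_{\bullet\leadsto i}$. Multiplying $\sigma_{\bullet\leadsto i}=\sum_{s\in in}x_s\sum_{\rho\in(s\leadsto i)}\weight(\rho)$ by $w_{ij}$ and appending the edge $i\to j$ to each such $\rho$ produces paths $s\leadsto j$, and every path into $j$ arises exactly once when we also sum over $\{i:i\to j\}$ (classify a path by its last edge). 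Hence $a_j=\sigma_{\bullet\leadsto j}$, and unpacking $\bsigma_{in(j)}=(\sigma_{\bullet\leadsto i})_{\{i:i\to j\}}$ rewrites this as $a_j=\langle\wt_j,\bsigma_{in(j)}\rangle$. The same argument applied down to the output gives $f_\Wt(\x)=\sigma_{\bullet\leadsto\bullet}=\sum_{s\in in}x_s\sum_{\rho\in(s\leadsto\bullet)}\weight(\rho)$.

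For items 2 and 3 the key point is that, with all weights other than $\wt_j$ fixed, $f_\Wt(\x)$ is an \emph{affine} function of $\wt_j$: each edge-weight occurs with multiplicity at most one along any path in the expansion above, and the edges into $j$ are precisely the coordinates of $\wt_j$. Partition the input-to-output paths according to whether they visit $j$. The paths avoiding $j$ sum, by definition, to $\sigma_{-j}$. Because the network is a DAG, every path visits $j$ at most once, so each path through $j$ factors uniquely as a path $s\leadsto j$ followed by a path $j\leadsto\bullet$; multiplicativity of $\weight$ then factorizes their total contribution as $\big(\sum_{s\in in}x_s\sum_{\rho\in(s\leadsto j)}\weight(\rho)\big)\cdot\sigma_{j\leadsto\bullet}=\sigma_{\bullet\leadsto j}\cdot\sigma_{j\leadsto\bullet}$, which equals $a_j\cdot\sigma_{j\leadsto\bullet}$ by item 1. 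This is exactly the decomposition \eqref{eq:decomp}, proving item 3. Item 2 follows by differentiating \eqref{eq:decomp} with respect to $a_j$: both $\sigma_{j\leadsto\bullet}$ and $\sigma_{-j}$ are sums of products of edge-weights (on edges downstream of, respectively disjoint from, $j$) and so do not depend on the scalar $a_j$, hence $\frac{\dd f_\Wt}{\dd a_j}=\sigma_{j\leadsto\bullet}$.

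Items 4 and 5 are then chain-rule bookkeeping on top of items 1 and 2. Item 4 is immediate from the factorization \eqref{eq:bp_sens}: $\delta_j=\frac{\dd\error}{\dd f}\cdot\frac{\dd f}{\dd a_j}=\beta\cdot\sigma_{j\leadsto\bullet}$, identifying $\sigma_{j\leadsto\text{out}}$ with $\sigma_{j\leadsto\bullet}$. For item 5, the chain rule gives $\grad_{\wt_j}\error=\delta_j\cdot\grad_{\wt_j}a_j$, and from $a_j=\langle\wt_j,\bsigma_{in(j)}\rangle$ with $\bsigma_{in(j)}$ independent of $\wt_j$ (its components are path-sums over edges strictly upstream of $j$) we get $\grad_{\wt_j}a_j=\bsigma_{in(j)}$; combining with item 4 yields $\grad_{\wt_j}\error=\delta_j\cdot\bsigma_{in(j)}=\beta\cdot\sigma_{j\leadsto\bullet}\cdot\bsigma_{in(j)}$.

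The only step that needs genuine care is the path-splitting used for items 2--3: one must invoke acyclicity to ensure each input-to-output path meets $j$ at most once, so that "split at $j$" is well-defined and the pairing with $(s\leadsto j)\times(j\leadsto\bullet)$ is a true bijection rather than an over- or under-count. Everything else is induction on topological order together with the chain-rule identities already recorded in Section~\ref{sec:nn}, so no new idea is required.
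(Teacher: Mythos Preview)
Your proposal is correct and is exactly the ``direct computation'' the paper alludes to: the paper's own proof is the single line ``Direct computations,'' so you have simply filled in the induction on topological order, the path-splitting at $j$ (using acyclicity), and the chain-rule bookkeeping that the author left implicit. There is no alternative approach to compare against.
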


Note that $a_j$ is a linear function of the weight vector $\wt_j$ of unit $j$, and that neither the path-sums from $j$ nor the path-sums avoiding $j$ depend on $\wt_j$.

\begin{proof}
  Direct computations.
\end{proof}

The output of a linear neural network is a polynomial function of its weights. This can be seen from the path-sum perspective by noting that the output of a linear net is the sum over all paths from the input layer to the output unit.

\subsection{Game theory and online learning}

In this subsection we reformulate linear neural networks as convex games. 

\begin{defn}[convex game]\label{d:convex_game}\eod
  A \textbf{convex game} $([N],\ccH,\loss)$ consists of a set $[N]:=\{1,\ldots,N\}$ of players, actions $\ccH=\prod_{j=1}^N \ccH_j$ and loss vector $\loss=(\ell_1,\ldots, \ell_N)$. Player $j$ picks actions $\wt_i$ from a convex compact set $\ccH_j\subset \bR^{d_j}$. Player $j$'s loss $\ell_j:\ccH\rightarrow \bR$ is convex in the $j^\text{th}$ argument.

  The classical games of von Neumann and Morgensten \cite{vonneumann:44} are a special case of convex games where $\ccH_i = \triangle_{d_i}$ is the probability simplex over a finite set of actions $[d_i]$ available to each agent $i$, and the loss is multilinear.
\end{defn}

It is well known that, even in the linear case, the loss of a neural network is \emph{not} a convex function of its weights or the weights of its individual layers. However, the loss of a linear network is a convex function of the weights of the individual units.

\begin{prop}[linear networks are convex games]\label{prop:lin_conv_game}\eod
  The players correspond to the units, where we impose that weight vectors are chosen from compact, convexs set $\ccH_j\subset \bR^{n_j}$ for each unit $j$, where $n_j$ is the in-degree of unit $j$. 

  Let $\wt_{-j}$ denote the set of all weights in a neural network except those of unit $j$.
  Define the loss of unit $j$ as $\ell_j(\wt_j,\wt_{-j}, \x,y):= \ell(f_\Wt(\x),y)$. 
  Then $\ell_j$ is a convex function of $\wt_j$ for all $j$.  
\end{prop}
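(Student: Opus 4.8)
The plan is to reduce the statement to the elementary fact that a convex function precomposed with an affine map is convex; all the substantive work has already been done in Proposition~\ref{prop:lin}. First I would fix a unit $j$, an arbitrary configuration $\wt_{-j}$ of the remaining weights, and a data point $(\x,y)$. Item~3 of Proposition~\ref{prop:lin} supplies the decomposition $f_\Wt(\x) = \langle \wt_j,\bsigma_{in(j)}\rangle\cdot\sigma_{j\leadsto\bullet} + \sigma_{-j}$, and the discussion following it records that $\sigma_{j\leadsto\bullet}$ and $\sigma_{-j}$ — and, via item~1, the components of $\bsigma_{in(j)}$ as well — do not depend on $\wt_j$. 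Reading this off, the map $\wt_j\mapsto f_\Wt(\x)$ is affine: a fixed linear functional $\langle\,\cdot\,,\bsigma_{in(j)}\rangle$ scaled by the constant $\sigma_{j\leadsto\bullet}$, plus the constant $\sigma_{-j}$.

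Second, I would invoke the standing hypothesis that $\ell(\cdot,y)$ is convex in its first argument. Since, by definition, $\ell_j(\wt_j,\wt_{-j},\x,y)=\ell\big(f_\Wt(\x),y\big)$ is the composition of the convex scalar function $\ell(\cdot,y):\bR\to\bR$ with the affine map $\wt_j\mapsto f_\Wt(\x)$, it is convex in $\wt_j$: for $\wt_j,\wt_j'\in\ccH_j$ and $\lambda\in[0,1]$, affinity pushes the convex combination through $f_\Wt$, and convexity of $\ell$ in its first slot then gives the convexity inequality for $\ell_j$. Averaging over $(\x,y)\sim\hat{\bP}$ preserves convexity, so $\error$ is likewise convex in $\wt_j$; the sets $\ccH_j$ are convex and compact by hypothesis, so $([N],\ccH,\loss)$ with $\loss=(\ell_1,\dots,\ell_N)$ is a convex game in the sense of Definition~\ref{d:convex_game}.

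I expect no genuine obstacle: once Proposition~\ref{prop:lin} is in hand the claim is a one-line corollary. The only point deserving a sentence of care is \emph{why} $\sigma_{j\leadsto\bullet}$, $\sigma_{-j}$ and the entries of $\bsigma_{in(j)}$ are free of $\wt_j$, and this is where acyclicity of the network enters: an incoming edge $(k,j)$ of $j$ can be traversed only upon arriving at $j$, so it cannot appear on a path from an input to a predecessor of $j$, nor on a path starting at $j$ and running to the output, nor on a path that avoids $j$ — in each case it would force a cycle or contradict the ``avoids $j$'' condition. Since Proposition~\ref{prop:lin} already carries out exactly this path-sum bookkeeping, I would simply cite it. It is worth remarking that the argument uses linearity of the whole network only through the affine-in-$\wt_j$ decomposition it provides; the same proof applies verbatim to any architecture whose output is affine in each unit's weights given the others, which is precisely the structural feature that the later notion of a gated game is designed to exploit.
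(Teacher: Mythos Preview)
Your proposal is correct and follows essentially the same route as the paper: invoke Proposition~\ref{prop:lin}.3 to write $f_\Wt(\x)$ as an affine function of $\wt_j$ (the path-sums $\bsigma_{in(j)}$, $\sigma_{j\leadsto\bullet}$, $\sigma_{-j}$ being constants with respect to $\wt_j$), then observe that a convex function composed with an affine map is convex. The additional remarks you make about acyclicity and averaging over data are sound elaborations, but the core argument is identical to the paper's.
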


\begin{proof}
   Note that the loss of every unit is the same and corresponds to the loss of the network as a whole; the notation is introduced to emphasize the relevant parameters. By proposition~\ref{prop:lin}.3, the loss can be written
  \begin{equation}
    \ell_j(\wt_j,\wt_{-j},\x,y) = \ell\big(\langle\wt_j,\bsigma_{in(j)}\rangle\cdot \sigma_{j\leadsto \bullet} + \sigma_{-j},y\big),
  \end{equation}
  where $\bsigma_{in}(j)$, $\sigma_{j\leadsto \bullet}$ and $\sigma_{-j}$ are functions of $\wt_{-j}$ and $\x$, and so \emph{constant} with respect to $\wt_{j}$. It follows that the loss is the composite of an affine function of $\wt_j$ with a convex function, and so is convex. 
\end{proof}

\begin{rem}[any neural network is a game]\label{rem:general}\eod
  Any neural network can be reformulated as a game by treating the individual units as players. However, in general the loss will not be a convex function of the players' actions and so convergence guarantees are not available. The main conceptual contribution of the paper is to show that modern convnets form a class of games which, although not convex, are close enough that convergence results from game theory can be adapted to the setting.
\end{rem}

As a concrete example, consider a network equipped with the mean-square error. The loss of unit $j$ is 
  \begin{equation}
    \ell_j(\wt) 
    = \Big(\sigma_{j\leadsto \bullet}\cdot 
    \langle\wt_j,\bsigma_{in(j)}\rangle - \big(y - \sigma_{-j}\big)\Big)^2.
  \end{equation}
  Define the residue $y_{-j} := \frac{y - \sigma_{-j}}{\sigma_{j\leadsto \bullet}}$. Unit $j$'s loss can be rewritten
  \begin{equation}
    \ell_j(\wt) = \begin{cases}
      \sigma_{j\leadsto \bullet}^2\cdot
      \left(\langle\wt_j,\bsigma_{\text{in}(j)}\rangle-y_{-j}\right)^2 
      & \text{if }\sigma_{j\leadsto \bullet}\neq 0\\
      0 & \text{else.}
    \end{cases}    
    \end{equation}
  Thus, unit $j$ performs linear regression on the residue, amplified by a scalar that reflects the network's sensitivity to $j$'s output. 

The goal of each player $j$ in a game is to minimize its loss $\ell_j$. Unfortunately, this is not realistic, since $\ell_j(\wt_j,\wt_{-j})$ depends on the actions of other players. If the game is repeated, then an attainable goal is for players to minimize their regret. A player's cumulative \textbf{regret} is the difference between the loss incurred over a series of plays and the loss that would have been incurred had the player consistently chosen the best play in hindsight:
\begin{align}
  \label{e:regret}
  \regret_j(T) = \sup_{\wt_j\in\ccH_j}\frac{1}{T}\sum_{t=1}^T
  \Big(& \ell_j(\wt^t_1,\ldots, \wt^t_j,\ldots, \wt^t_N) \\
   - & \ell_j(\wt^t_1,\ldots,\wt_j,\ldots,\wt^t_N)\Big).
\end{align}
An algorithm has \textbf{no-regret} if $\regret_j(T)\xrightarrow[\infty]{T}0$. That is, an algorithm has no-regret (asymptotically) if $\regret_j(T)$ grows sublinearly in $T$. It is important to note that no-regret guarantees hold against \emph{any} sequence of actions by the other players in the game -- be they stochastic, adversarial, or something else. A player with no-regret plays optimally given the actions of the other players. Examples of no-regret algorithms on convex losses include online gradient descent, the exponential weights algorithm, follow the regularized leader, AdaGrad, and online mirror descent.

It was observed by Foster and Vohra \cite{foster:97} that, if players play according to no-regret online learning rules, then the average of the sequence of plays converges to a correlated equilibrium \cite{aumann:74}. Proposition~\ref{prop:corr_eqm} below shows a more general result: no-gated-regret algorithms converge to correlated equilibrium at a rate that depends on the gated-regret. 

Let us briefly recall the relevant notion of correlated equilibrium. A distribution $\bP\in\triangle_\ccH$ is an $\epsilon$-\textbf{coarse correlated equilibrium} if, for every player $j$, it holds that
\begin{equation}
  \label{eq:corr_eq}
  \expec_{\wt\sim \bP}\Big[\ell_j(\wt)\Big]  \leq \inf_{\wt_j\in\ccH_j}\expec_{\wt\sim \bP}\Big[\ell_j\big(\wt_j,\wt_{-j}\big)\Big] + \epsilon.
\end{equation}
When $\epsilon=0$ we refer to a coarse correlated equilibrium. The $\epsilon$-term in Eq.~\eqref{eq:corr_eq} quantifies the deviation of $\prob$ from a coarse correlated equilibrium \cite{blum:07}.  The notion of correlated equilibrium is weaker than Nash equilibrium. The set of correlated equilibria contains the convex hull of the set of Nash equilibria as a subset.

We thus have two perspectives on linear nets: as networks or as games. To train a network, we use algorithms such as gradient descent implemented via backpropagation. To play a game, the players use no-regret algorithms. Sections~\ref{sec:gt} and \ref{sec:doco} show the two perspectives are equivalent in the more general setting of modern convnets. In particular, correlated equilibria games map to critical points of energy landscapes. Our strategy is then to convert results about the convergence of convex games to correlated equilibria into results about the convergence of backpropagation on neural nets.

\section{Gated Games and Convolutional Networks}
\label{sec:gt}

This section presents a detailed analysis of rectifier nets. The key observation is that rectifiers act as gates, which leads directly to \emph{gated games}. Gates games are not convex. However, they are close enough that results on convergence to correlated equilibria can easily be adapted to the setting. 

The main technical work of the section is to introduce notation to handle the interaction between path-sums and gates. \emph{Path-sum games} are then introduced as a class of gated games capturing the dynamics of rectifier nets, see proposition~\ref{prop:net_circ}. Finally, we show how to extend the results to convnets.

\subsection{Rectifier networks}
\label{sec:relu}

Historically, neural networks typically used sigmoid $\sigma(a) = \frac{1}{1+e^{-a}}$ or tanh $\tau(a) = \frac{e^a-e^{-a}}{e^a+e^{-a}}$ nonlinearities. Alternatives were investigated by Jarrett \emph{et al} in \cite{jarrett:09}, who found that rectifiers $\rho(a) = \max(0,a)$ often perform much better than sigmoids in practice. Rectifiers are now the default nonlinearity in convnets \cite{nair:10,glorot:11,maas:13,dahl:13,zeiler:13}. There are many variants on the theme, including noisy rectifers, $\rho_N(a) = \max(0,a+N(0,\sigma(a))$, and leaky rectifers
  \begin{equation}
    \rho_L(a) = \begin{cases}
      a & \text{if }a>0\\
      0.01 a & \text{else}
    \end{cases}
  \end{equation}
introduced in \cite{nair:10} and \cite{maas:13} respectively.

\subsubsection{Rectifiers gate error backpropagation}
The rectifier is convex and differentiable except at $a=0$, with subgradient
\begin{equation}
  \indic(a) = \frac{d \rho}{da} = \begin{cases}
    1 & \text{if }a\geq0\\
    0 & \text{else.}
  \end{cases}
\end{equation}
The subgradient acts as an indicator function, which motivates the choice of notation. Substituting $\indic_j$ for $h_j'$ in the recursive backprop computation, Eq.~\eqref{eq:bp}, yields
\begin{equation}
  \label{eq:bp_relu}
  \delta_j = \frac{\dd\ell}{\dd a_j}   
  = \sum_{\{k:j\rightarrow k\}}\delta_k\cdot w_{jk}\cdot\indic_j
\end{equation}
Rectifiers act as \emph{gates}: the only difference between backprop in a linear network, Eq.~\eqref{eq:bp_lin}, and a rectifier network, Eq.~\eqref{eq:bp_relu}, is that some units are zeroed out during both the forward and backward sweeps. In the forward sweep, rectifiers zero out units which would have produced negative outputs; on the backward sweep, the rectifier subgradients zero out the exact same units by acting as indicator functions. 

Zeroed out (or \emph{inactive}) units do not contribute to the feedforward sweep and do not receive an error signal during backpropagation. In effect, the rectifiers select a \emph{linear subnetwork of active units} for use during forward and backpropagation.

\subsection{Gated Games}

We have seen that linear networks are convex games. Extending the result to rectifier networks requires generalizing convex games to the setting where only a subset of players are active on each round.

\begin{defn}[gated games]\label{d:gated_game}\eod
  Let $\cP(S)$ denote the powerset of $S$. A \textbf{gated game} $([N],\ccH,\loss,\activ)$ is a convex game equipped with a gate $\activ:\ccH\rightarrow \cP([N])$. Players $j\in\activ(\wt_{1:N})$ are \textbf{active}. Inactive players incur no loss. Each active player incurs a convex loss $\ell_j:\ccH_{\activ(\wt_{1:N})}\rightarrow \bR$, that depends on its action and the actions of the other active players; i.e. $\ccH_{\activ(\wt_{1:N})} := \prod_{j\in\activ(\wt_{1:N})}\ccH_j$. Inactive players do not incur a loss.
\end{defn}
The gated forecaster setting formalizes the perspective of a player in a gated game:
\renewcommand*{\algorithmcfname}{Setting}
\begin{algorithm}
  \caption{\textsc{Gated Forecaster}}\label{alg:sf}
  \DontPrintSemicolon
  \SetKwInOut{Input}{input}
  \Input{
    Set $E$ of other players\;
    initialize $\wt^1 \in \ccH$\;
  } 
  \For{rounds $t =1, 2, \ldots$}{
    Active players $E^t\subset E$ reveal actions $\x_E^t:=\{\x^t_i\}_{i\in E^t}$\;
    \eIf{\texttt{gate}$(\wt^t,\x^t)$}{
      Environment reveals convex $\ell^t$\; 
      Forecaster incurs loss $\ell^t\big(\wt^t,\x^t_E\big)$\;
      Forecaster updates weights $\wt^{t+1} \longleftarrow \texttt{weight}(\wt^t,\x^t_E,\ell^t)$
    }{
      Forecaster inactive; no loss incurred\; 
      Weights remain unchanged $\wt^{t+1}\longleftarrow \wt^t$
    }
  }
\end{algorithm}

In neural nets, rectifier functions, max-pooling, and dropout all act as forms of gates that control (deterministically or probabilistically) whether units actively respond to an input. Importantly, inactive units do not receive any error under backpropagation, as discussed in section~\ref{sec:relu}. 

In the gated setting, players only experience regret with respect to their actions when active. We therefore introduce \emph{gated-regret}:
\begin{align}
  \gregret(T)  = \frac{1}{T_i}&\left(\sum_{\{t\in[T]:i\in\activ(\wt^t)\}}
  \ell^t_i(\wt^t_i,\wt^t_{-i})\right.\\
  & - \left.\sup_{\wt_i\in\ccH_i}\;\sum_{\{t\in[T]:i\in\activ(\wt^t)\}} \ell^t(\wt_i,\wt^t_{-i})\right),
\end{align}
where $T_i:=\left|\{t\in[T]:i\in\activ(\wt^t)\}\right|$ is the number of rounds in which player $i$ is active.

\begin{rem}[permanently inactive units]\eod
  If a player is permanently inactive then, trivially, it has no gated-regret. This suggests there is a loophole in the definition that players can exploit. We make two comments. Firstly, players do not control when they are inactive. Rather, they optimize over the rounds they are exposed to. 

  Secondly, in practice, some units in rectifier networks do become inactive. The problem is mild: rectifier nets still outperform other architectures. Reducing the number of inactive units was one of the motivations for maxout units \cite{goodfellow:13}.
\end{rem}

The next step is to extend correlated equilibrium to gated games. The intuition behind correlated equilibrium is that a \textbf{signal} $\prob(\wt)$ is sent to all players which guides their behavior. However, inactive players do not observe the signal. The signal received by player $j$ \emph{when active} is the conditional distribution
\begin{equation}
  \label{eq:wake_signal}
  \bP_j := \bP\big(\wt|j\in\activ(\wt)\big) = 
  \begin{cases}
    \frac{\bP(\wt)}{\bP(\{\wt | j\in\activ(\wt)\})} & \text{if }j\in\activ(\wt)\\
    0 &\text{else.}
  \end{cases}
\end{equation}

The following proposition extends the result that no-regret learning leads to coarse correlated equilibria from convex to gated games:

\begin{prop}[no gated-regret $\rightarrow$ correlated equilibrium]\label{prop:corr_eqm}\eod
  Let $G=([N],\ccH,\loss,\activ)$ be a gated game and suppose that the players follow strategies with gated-regret $\leq \epsilon$. Then, the empirical distribution $\hat{\bP}$ of the actions played is an $\epsilon$-coarse correlated equilibrium.
\end{prop}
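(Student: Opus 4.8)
The plan is to follow the classical Foster--Vohra averaging argument \cite{foster:97}, but with the unconditional signal replaced by the conditional ``wake'' signals $\bP_j$ of Eq.~\eqref{eq:wake_signal}. No convexity is actually needed for this particular implication: convexity only enters when one wants no-gated-regret algorithms to \emph{exist}. The heart of the proof is a bookkeeping identity showing that, for the \emph{empirical} distribution of play, the coarse-correlated-equilibrium gap of player $j$ is literally its gated-regret.

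First I would fix notation. Let $\wt^1,\ldots,\wt^T$ be the realized joint plays and let $\hat\bP := \frac{1}{T}\sum_{t=1}^T \delta_{\wt^t}$ be their empirical distribution. For player $j$, write $A_j := \{t\in[T] : j\in\activ(\wt^t)\}$ and $T_j := |A_j|$ as in the statement. The key elementary observation is that the wake signal of Eq.~\eqref{eq:wake_signal}, computed for $\hat\bP$, is just the uniform distribution over the active rounds, $\hat\bP_j = \frac{1}{T_j}\sum_{t\in A_j}\delta_{\wt^t}$, because $\hat\bP(\{\wt : j\in\activ(\wt)\}) = T_j/T$ and each realized play carries mass $1/T$.

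Next I would spell out what ``$\epsilon$-coarse correlated equilibrium'' means for a gated game: a distribution $\bP$ qualifies if, for every $j$, $\expec_{\wt\sim\bP_j}[\ell_j(\wt)] \leq \inf_{\wt_j\in\ccH_j}\expec_{\wt\sim\bP_j}[\ell_j(\wt_j,\wt_{-j})] + \epsilon$ --- i.e.\ Eq.~\eqref{eq:corr_eq} but with the wake signal $\bP_j$ in place of $\bP$, so that inactive rounds are simply not seen by $j$. Plugging $\hat\bP_j$ into this, the left-hand side becomes $\frac{1}{T_j}\sum_{t\in A_j}\ell^t_j(\wt^t_j,\wt^t_{-j})$, which is exactly the first term of $\gregret(T)$; the right-hand infimum becomes $\frac{1}{T_j}\inf_{\wt_j}\sum_{t\in A_j}\ell^t_j(\wt_j,\wt^t_{-j})$, which is the second term. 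Hence the CCE gap of $j$ under $\hat\bP$ equals $\gregret(T) \leq \epsilon$ by hypothesis; since $j$ was arbitrary, $\hat\bP$ is an $\epsilon$-coarse correlated equilibrium. I would also remark that the no-gated-regret guarantee holds against \emph{any} behaviour of the other players, so it applies in particular to their realized plays, and that the time-varying, possibly adversarial losses $\ell^t$ cause no difficulty.

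The step I expect to require the most care is not the algebra above but pinning down the counterfactual in the gated setting: when player $j$ contemplates deviating to a fixed $\wt_j$, the gate $\activ$ --- which in a rectifier net depends on $j$'s own pre-activation --- could in principle change which rounds $j$ is active on. The honest reading, consistent with the gated-forecaster setting (Setting~\ref{alg:sf}) and with the sleeping-experts lineage \cite{blum:97}, is that the active set $A_j$ is frozen at its realized value and the deviation loss $\ell^t_j(\wt_j,\wt^t_{-j})$ is evaluated only on $t\in A_j$; I would state this convention explicitly before invoking it, since it is precisely what makes the bookkeeping identity hold on the nose. Everything else is a direct calculation.
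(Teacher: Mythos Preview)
Your proposal is correct and follows essentially the same route as the paper: both arguments simply observe that the empirical wake signal $\hat\bP_j$ puts mass $1/T_j$ on each joint action realized while $j$ is active, so that the gated-regret bound rewrites verbatim as the $\epsilon$-CCE inequality under $\hat\bP_j$. Your treatment is in fact more careful than the paper's, which does not explicitly flag the convention that the active set $A_j$ is held fixed when evaluating the deviation; that clarification is a genuine improvement and worth keeping.
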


The rate at which the gated-regret of the players decay thus controls the rate of convergence to a correlated equilibrium.

\begin{proof}
  We adapt a theorem for two-player convex games by Hazan and Kale in \cite{hazan:07} to our setting. Since player $j$ has gated-regret $\leq \epsilon$, it follows that 
  \begin{align}
    \frac{1}{T_j}\sum_{\{t\in[T]:j\in\activ(\wt^t)\}}
    \Big(\ell_j(\wt^t) - \ell_j\big(\wt_j,\wt_{-j}^t\big)
    \Big)\leq \epsilon \quad\forall \wt_j\in\ccH_j.
  \end{align} 
  The empirical distribution $\hat{\bP}_j$ assigns probability $\frac{1}{T_j}$ to each joint action $\wt^t$ occurring while player $j$ is active. We can therefore rewrite the above inequality as
  \begin{equation}
    \expec_{\wt\sim \hat{\bP}_j}\Big[ \ell_j(\wt)\Big]
    -\expec_{\wt\sim \hat{\bP}_j}\Big[\ell_j\big(\wt_j,\wt_{-j}\big)\Big]
    \leq \epsilon
    \quad\quad \forall \wt_j\in\ccH_j
  \end{equation}
  and the result follows.
\end{proof}

\subsection{Path-Sum Games}

Let $G$ be a directed acyclic graph corresponding to a rectifier neural network with $N$ units that are not input units. We provide an alternate description of the dynamics of the feedforward and feedback sweep on the neural net in terms of path-sums. The definitions are somewhat technical; the underlying intuition can be found in the discussions of linear and rectifier networks above.

Let $d_j:=|\{i:i\rightarrow j\}|$ denote the indegree of node $j$. 
Every edge is assigned a \emph{weight}. In addition, each source node $s$ (with no incoming edges) is assigned a weight $w_s$. The weights assigned to source nodes are used to encode the input to the neural net. 

Recall that given path $\rho$, we write $\rho\in (j\leadsto k)$ if $\rho$ starts at node $j$ and finishes at node $k$. Given a set of nodes $A$, write $\rho\subset A$ if all the nodes along $\rho$ are elements of $A$. 

\begin{defn}[path-sums in rectifier nets]\label{def:path_sum}\eod
  The \textbf{weight of a path} is the product of the weights of the edges along the path. If a path starts at a source node $s$, then $w_s$ is included in the product. 

  Given a set of nodes $A$ and a node $j$, the \textbf{path-sum} $\sigma_{\bullet\leadsto j}^A$ is the sum of the weights of all paths in $A$ from source nodes to $j$:
  \begin{equation}
    \sigma_{\bullet\leadsto j}^A := \sum_{s\in\text{source}}\quad\sum_{\{\rho\,|\,\rho\subset A \text{ and }\rho\in(s\leadsto j)\}}\weight(\rho). 
  \end{equation}
  By convention, $\sigma_{\bullet\leadsto j}^A$ is zero if no such path exists (for example, if $j\not\in A$).

  The set $\activ$ of \textbf{active units} is defined inductively on $\kappa$, which tracks the length of the longest path from source units to a given unit: 
  \begin{equation}
    \text{Let }
    \activ_\kappa = \{\text{active units with longest path from a source }\leq \kappa\}.
  \end{equation}
  Source units are always active, so set $\activ_0 = \{\text{all source units}\}$.
  Suppose unit $j$ has source-path-length $\kappa+1$ and elements in $\activ_\kappa$ have been identified. Then, $j$ is active if it corresponds to
  \begin{itemize}
    \item a linear unit or
    \item a rectifier with $\sigma^{\activ_\kappa\cup\{j\}}_{\bullet\leadsto j}>0$.
   \end{itemize}
\end{defn}

For simplicity we suppress that $\activ$ is a function of weights from the notation. It is also convenient to drop the superscript $\activ$ via the shorthand $\varsigma_{\bullet\leadsto j} := \sigma_{\bullet\leadsto j}^\activ$. 

The following proposition connects active path-sums to the feedforward and feedback sweeps in a neural network:

\begin{prop}[structure of rectifier nets]\label{prop:struc}\eod
  Let $\wt_j= (w_i)_{\{i:i\rightarrow j\}}$ and $\bvsigma_{\text{in}(j)}:=(\varsigma_{\bullet\leadsto i})_{\{i:i\rightarrow j\}}$. 
  Further, introduce notation $\bvsigma_\text{out}=(\varsigma_o)_{o\in\text{out}}$ for the output layer. Then
  \begin{enumerate}[1.]
    \item \textbf{Feedforward outputs.}\\
    If inputs to the network are encoded in source weights as above then the output of unit $j$ in the neural network is $\varsigma_j$. Specifically, if $j$ is linear then $\varsigma_{\bullet\leadsto j} = \langle\wt_j,\bvsigma_{\text{in}(j)}\rangle$; if $j$ is a rectifier then $\varsigma_{\bullet\leadsto j} = \max(0,\langle\wt_j,\bvsigma_{\text{in}(j)}\rangle)$.
    \item \textbf{Decomposition of network output.}\\
    Let $\bvsigma_{j\leadsto\bullet}$ denote the sum of active path weights from $j$ to the output layer. 
    The output of the network decomposes as
    \begin{equation}
      \bvsigma_\text{out} 
      = 
      \begin{cases}
        \bvsigma_{j\leadsto\bullet}\cdot 
      \left\langle\wt_j,\bvsigma_{\text{in}(j)}\right\rangle 
      + \bsigma^{\activ\setminus\{j\}}_\text{out} 
      & \text{if player $j$ active} \\
      \bsigma^{\activ\setminus\{j\}}_\text{out}  & \text{else}
      \end{cases}      
    \end{equation}
    where $\bsigma^{\activ\setminus\{j\}}_\text{out}$ is the sum over active paths from sources to outputs \emph{that do not intersect $j$}.
    \item \textbf{Backpropagated errors.}\\
    Suppose the network is equipped with error function $\ell(\bvsigma_\text{out},\y)$.
    Let $\subg := \grad_\text{out}\ell(\bvsigma_\text{out},\y)$ denote the gradient of $\ell$. The backpropagated error signal received by unit $j$ is
    $\delta_j = \left\langle\subg,\bvsigma_{j\leadsto\bullet}\right\rangle$.
    \item \textbf{Error gradients.}\\
    Finally,
      \begin{align}
        \left\langle\grad_{\wt_j}\ell(\bvsigma_\text{out},\y),\wt_j\right\rangle
                & =\left\langle\subg,\bvsigma_{\text{out}}
                -\bsigma^{\activ\setminus\{j\}}_\text{out}
                \right\rangle\\
                & = \left\langle\subg,\bvsigma_{j\leadsto out}\right\rangle
                \cdot \left\langle\wt_j,\bvsigma_{in(j)}\right\rangle
                = \delta_j\cdot\varsigma_{\bullet\leadsto j}
      \end{align}
  \end{enumerate}
\end{prop}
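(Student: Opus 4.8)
The plan is to reduce the entire statement to the linear case, Proposition~\ref{prop:lin}, by showing that the gate $\activ$ of Definition~\ref{def:path_sum} selects a linear subnetwork on which both the feedforward and the backprop sweeps agree verbatim with their linear-network counterparts. So the only real work is to check that the inductive definition of $\activ$ reproduces the activity pattern of the actual feedforward computation $\ba_l = \Wt_l\bh_{l-1}$, $\bh_l = \rho(\ba_l)$; everything else is an application of Proposition~\ref{prop:lin} to that subnetwork.

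For Part~1 I would induct on $\kappa$, the length of the longest source-to-$j$ path. The base case $\kappa=0$ is immediate: a source $s$ outputs its assigned weight $w_s = \varsigma_{\bullet\leadsto s}$. For the step, take $j$ with source-path-length $\kappa+1$ and assume the claim for all units in $\activ_\kappa$ and for all inactive shallower units (which output $0 = \varsigma_{\bullet\leadsto\cdot}$). Then the pre-activation is $a_j = \langle\wt_j,\bh_{\text{in}(j)}\rangle = \langle\wt_j,\bvsigma_{\text{in}(j)}\rangle$ by the inductive hypothesis, and decomposing each active source-to-$j$ path through its last edge $i\to j$ gives $\sigma^{\activ_\kappa\cup\{j\}}_{\bullet\leadsto j} = \sum_{i\to j} w_{ij}\,\varsigma_{\bullet\leadsto i} = \langle\wt_j,\bvsigma_{\text{in}(j)}\rangle = a_j$ (inactive predecessors contribute $0$ on both sides). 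Hence a rectifier $j$ is declared active exactly when $a_j>0$, and in every case its output $\rho(a_j)=\max(0,a_j)$ equals $\varsigma_{\bullet\leadsto j}$ (it is $a_j$ when active, and $0=\sigma^\activ_{\bullet\leadsto j}$ when inactive); a linear $j$ is always active and outputs $a_j = \varsigma_{\bullet\leadsto j}$ directly. In particular $\indic_j = 1$ for every active unit, so the active subnetwork, with source weights encoding the input, is literally a linear network in the sense of Section~\ref{sec:lin}.

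Parts~2--4 then follow by applying Proposition~\ref{prop:lin} to this linear subnetwork, coordinatewise over the output units (Proposition~\ref{prop:lin} is stated for scalar output; stacking over output units $o$ promotes the scalar path-sums $\sigma_{j\leadsto\bullet}$ to the vector $\bvsigma_{j\leadsto\bullet}$ and $\beta$ to $\subg$). Part~2 is Proposition~\ref{prop:lin}.3 applied to the active subnetwork, identifying $\bvsigma_{j\leadsto\bullet}$ and $\bsigma^{\activ\setminus\{j\}}_\text{out}$ with the ``paths from $j$'' and ``paths avoiding $j$'' terms restricted to active paths; when $j$ is inactive no active path meets $j$, so only the avoiding term survives. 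Part~3 is Proposition~\ref{prop:lin}.4: backprop on the active subnetwork reduces to Eq.~\eqref{eq:bp_lin}, since $\indic_j=1$ on active units while $\indic_j=0$ zeros out inactive ones (matching $\bvsigma_{j\leadsto\bullet}=0$), so the error reaching $j$ is $\subg$ contracted against the active path weights from $j$ to the outputs. Part~4 follows by differentiating the Part~2 identity in $\wt_j$: since $\bvsigma_{\text{in}(j)}$, $\bvsigma_{j\leadsto\bullet}$ and $\bsigma^{\activ\setminus\{j\}}_\text{out}$ are independent of $\wt_j$, we get $\grad_{\wt_j}\ell = \delta_j\,\bvsigma_{\text{in}(j)}$, hence $\langle\grad_{\wt_j}\ell,\wt_j\rangle = \delta_j\langle\wt_j,\bvsigma_{\text{in}(j)}\rangle = \delta_j\,\varsigma_{\bullet\leadsto j}$ by Part~1; the remaining equality is just substituting $\langle\wt_j,\bvsigma_{\text{in}(j)}\rangle\,\bvsigma_{j\leadsto\bullet} = \bvsigma_{\text{out}} - \bsigma^{\activ\setminus\{j\}}_\text{out}$ from Part~2.

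The main obstacle is entirely in the first two steps: verifying that the inductive gate matches the feedforward activity pattern, handling the boundary convention at $a_j=0$ (where the output is $0$ regardless, so Part~1 is unaffected), and — crucially for Part~4 — justifying that $\activ$ is constant in a neighborhood of $\wt_j$ so that differentiating through the ``constant'' terms is legitimate. This holds off the measure-zero set of weights where some rectifier pre-activation vanishes, so the gradient identity in Part~4 is to be read as holding at such generic weights (equivalently, as an identity for one choice of subgradient). This is exactly the non-smooth regime emphasized in the introduction, and is the regime in which the later convergence results operate.
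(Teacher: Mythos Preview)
Your proposal is correct and matches the paper's approach: the paper's own proof is the single line ``Direct computation, paralleling proposition~\ref{prop:lin},'' and what you have written is precisely a careful unpacking of that parallel, reducing to the linear subnetwork selected by the gate $\activ$. Your added remarks on the $a_j=0$ boundary and on $\activ$ being locally constant in $\wt_j$ (so that the Part~4 differentiation is legitimate off a measure-zero set) go beyond what the paper states explicitly but are exactly the points one needs to make the ``direct computation'' rigorous.
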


\begin{proof}
  Direct computation, paralleling proposition~\ref{prop:lin}.
\end{proof}

The output of a rectifier network is a piecewise polynomial function of its weights. To see this, observe that the output of a rectifier net is the sum over all active paths from the input layer to the output unit, see also \cite{choromanska:14}. 

The next step is to construct a game played by the units of the neural network. It turns out there are two ways of doing so:
  
\begin{defn}[path-sum games]\label{def:bp_game}\eod
  The set of players is $\{0\}\cup [N]$. The zeroth player corresponds to the environment and is always active. The environment plays labeled datapoints $\wt_0=(\wt_\text{source},\y)\in\bR^{|\text{source}|+|\text{label}|}$ and suffers no loss. The remaining $N$ players correspond to non-source units of $G$. Player $j$ plays weight vector $\wt_j$ in compact convex $\ccH_j\subset\bR^{d_j}$.
  
  The losses in the two games are:
  \begin{itemize}
    \item \textbf{Path-sum prediction game ($\predg$).}\\
    Player $j$ incurs $\ell_j\big(\wt\big) := \ell(\bvsigma_\text{out},\y)$
    \emph{when active} and no loss when inactive.\\
    \item \textbf{Path-sum gradient game ($\gradg$).}\\
    Player $j$ incurs $\left\langle\grad\ell_j,\wt_j\right\rangle$ \emph{when active}, where $\grad\ell_j:=\grad_{\wt_j}\ell_j$, and no loss when inactive.
  \end{itemize}  
\end{defn}
$\predg$ and $\gradg$ are analogs of prediction with expert advice and the hedge setting. In the hedge setting, players receive linear losses and choose actions from the simplex; in $\gradg$, players receive linear losses. The results below hold for both games, although our primary interest is in $\predg$. Note that $\gradg$ has the important property that the loss of player $j$ is a \emph{linear} function of player $j$'s action when it is active:
\begin{equation}
  \left\langle\grad\ell_j,\wt_j\right\rangle = \begin{cases}
    \delta_j \cdot \langle\wt_j,\bvsigma_{in(j)}\rangle & \text{if $j$ active}\\
    0 & \text{else}.
  \end{cases}
\end{equation}
Finally, observe that the regret when playing $\gradg$ upper bounds $\predg$, since regret-bounds for linear losses are the worst-case amongst convex losses.

\begin{rem}[minibatch games]\eod
  It is possible to construct \emph{batch} or \emph{minibatch} games, by allowing the environment to play sequences of moves on each round. 
\end{rem}

\begin{prop}[path-sum games are gated games]\label{prop:net_circ}\eod
  $\predg$ and $\gradg$ are gated games if the error function $\ell(\bvsigma_\text{out},\y)$ is convex in its first argument. That is, rectifier nets are gated games.
\end{prop}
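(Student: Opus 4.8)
The plan is to check, for each of $\predg$ and $\gradg$, the three defining requirements of Definition~\ref{d:gated_game}: that it is a convex game in the sense of Definition~\ref{d:convex_game}, that it carries a gate $\activ:\ccH\to\cP([N])$, and that under this gate inactive players incur no loss while every active player's loss is convex in its own action. The player set $\{0\}\cup[N]$, the convex compact action sets $\ccH_j$, and the clause that inactive players incur nothing are built into Definition~\ref{def:bp_game}; the environment, player~$0$, is always active and suffers no loss, so it is irrelevant to the convexity requirement. For the gate I would take $\activ$ to be the set of active units of Definition~\ref{def:path_sum}, whose inductive, depth-ordered construction makes it a well-defined function of the weight profile. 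All that then remains is convexity of each active player's loss.

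For $\gradg$ this is immediate: by Definition~\ref{def:bp_game} and Proposition~\ref{prop:struc}.4, when $j$ is active its loss is $\langle\grad\ell_j,\wt_j\rangle = \delta_j\cdot\langle\wt_j,\bvsigma_{\text{in}(j)}\rangle$, a linear -- hence convex -- functional of $\wt_j$ once $\delta_j$ and $\bvsigma_{\text{in}(j)}$ are read off as the constants of the current round. For $\predg$ I would invoke Proposition~\ref{prop:struc}.2: when $j$ is active,
\begin{equation}
  \ell_j(\wt) = \ell\big(\bvsigma_\text{out},\y\big)
  = \ell\big(\bvsigma_{j\leadsto\bullet}\cdot\langle\wt_j,\bvsigma_{\text{in}(j)}\rangle + \bsigma^{\activ\setminus\{j\}}_\text{out},\,\y\big),
\end{equation}
which exhibits $\ell_j$ as the composition of the affine map $\wt_j\mapsto\bvsigma_\text{out}$ with $\ell(\cdot,\y)$; since $\ell$ is convex in its first argument by hypothesis and a convex function precomposed with an affine map is convex, $\ell_j$ is convex in $\wt_j$. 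This mirrors, gate-by-gate, the proof of Proposition~\ref{prop:lin_conv_game} in the linear case. Both unit types are handled uniformly: a linear unit is always active with output $\langle\wt_j,\bvsigma_{\text{in}(j)}\rangle$, and a rectifier unit, when active, has its $\max$ resolved to $\langle\wt_j,\bvsigma_{\text{in}(j)}\rangle\geq 0$, so the decomposition applies in both cases. Hence both games satisfy every clause of Definition~\ref{d:gated_game}, and the final sentence merely records that $\predg$ and $\gradg$ are built from an arbitrary rectifier net.

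The step deserving the most care -- the one I would actually spell out -- is that the coefficients $\bvsigma_{\text{in}(j)}$, $\bvsigma_{j\leadsto\bullet}$ and $\bsigma^{\activ\setminus\{j\}}_\text{out}$ do not involve $\wt_j$, so that $\wt_j\mapsto\bvsigma_\text{out}$ really is affine. For $\bvsigma_{\text{in}(j)} = (\varsigma_{\bullet\leadsto i})_{i\to j}$ this is forced by the depth ordering in Definition~\ref{def:path_sum}: each in-neighbour $i$ of $j$ has strictly shorter longest path from a source, so its value and the activity of every unit feeding it are fixed before $\wt_j$ enters the recursion. For the sums from $j$ and the sums avoiding $j$, one uses that -- as flagged in the discussion after Definition~\ref{def:path_sum}, where the weight-dependence of $\activ$ is suppressed -- the active set is held at its realized value within a round, so Proposition~\ref{prop:struc}.2 holds with coefficients depending only on $\wt_{-j}$ and the inputs, exactly as in the ``Note'' following Proposition~\ref{prop:lin}. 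The one genuinely new phenomenon relative to the linear case is that $j$'s own activity depends on $\operatorname{sign}\langle\wt_j,\bvsigma_{\text{in}(j)}\rangle$; but this only cuts $\ccH_j$ into the convex half-space on which $j$ is active -- precisely where the convexity argument above lives -- and its complement, on which $j$ is inactive and incurs nothing. Once this bookkeeping is in place the proposition follows.
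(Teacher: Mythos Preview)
Your proposal is correct and follows essentially the same approach as the paper: invoke Proposition~\ref{prop:struc}.2 to write $\ell_j$ as a convex function of an affine map in $\wt_j$ for $\predg$, and Proposition~\ref{prop:struc}.4 to see the $\gradg$ loss is linear. You are in fact more careful than the paper, which simply asserts that $\bvsigma_{j\leadsto\bullet}$, $\bvsigma_{\text{in}(j)}$ and $\bsigma^{\activ\setminus\{j\}}_\text{out}$ are constants in $\wt_j$; your explicit discussion of the depth ordering, of holding $\activ$ at its realized value on the round, and of the half-space where $j$ is active fills in bookkeeping the paper leaves implicit.
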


The gating structure is essential; path-sum games are \emph{not} convex, even for rectifiers with the mean-squared error: composing a rectifier with a quadratic can yield the nonconvex function $f(x)=(\max(0,x)-1)^2$. Even simpler, the negative of a rectifier is not convex.

\begin{proof}
  It is required to show that the losses under $\predg$ and $\gradg$, that is $\ell_j$ and $\langle\grad\ell_j,\wt_j\rangle$, are convex functions of $\wt_j$ when player $j$ is active. Clearly each loss is a scalar-valued function.

  By proposition~\ref{prop:struc}.2, when player $j$ is active the network loss has the form
  \begin{align}
    \ell_j(\wt) & = \ell\left(\bvsigma_{j\leadsto\bullet}\cdot 
      \left\langle\wt_j,\bvsigma_{\text{in}(j)}\right\rangle 
      + \bsigma^{\activ\setminus\{j\}}_\text{out},\y\right)\\
    & = \ell\left(c_1\cdot 
      \left\langle\wt_j,\bvsigma_{\text{in}(j)}\right\rangle 
      + c_2,\y\right).
  \end{align}
  The terms $\bvsigma_{j\leadsto\bullet}$, $\bvsigma_{\text{in}(j)}$ and $\bsigma^{\activ\setminus\{j\}}_\text{out}$ are all constants with respect to $\wt_{j}$. Thus, the network loss is an affine transformation of $\wt_j$ (dot-product followed by multiplication by a constant and adding a constant) composed with a convex function, and so convex. 

  By proposition~\ref{prop:struc}.4, the gradient loss has the form
  \begin{equation}
    \grad_{\wt_j}\ell_j(\wt) = \left\langle\subg,\bvsigma_{j\leadsto out}\right\rangle
                \cdot \bvsigma_{in(j)}
                = c_1\cdot \bvsigma_{in(j)}
  \end{equation}
  when player $j$ is active -- which is linear in $\wt_j$ since all the other terms are constants with respect to $\wt_j$.
\end{proof}

\begin{rem}[dependence of loss on other players]\eod
  We have shown that it is a convex function of player $j$'s action, when player $j$ is active. Note that: (i) the loss of player $j$ depends on the actions chosen by other players in the game and (ii) the loss is not a convex function of the joint-action of all the players. It is for these reasons that the game-theoretic analysis is essential. 
\end{rem}

The proposition does not merely hold for toy cases. The next section extends the result to maxout units, DropOut, DropConnect, and convolutional networks with shared weights and max-pooling. Proposition~\ref{prop:net_circ} thus applies to convolutional networks as they are used in practice. Finally, note that proposition~\ref{prop:net_circ} \emph{does not} hold for leaky rectifier units \cite{maas:13} or units that are not piecewise linear, such as sigmoid or $\tanh$.

\subsection{Convolutional Networks}
\label{sec:conv}

We extend proposition~\ref{prop:net_circ} from rectifier nets to convnets. 

\begin{prop}[convnets are gated games]\label{prop:conv}\eod
  Let ${\mathcal N}$ be a convolutional network with any combination of linear, rectifier, maxout and max-pooling units. Then, ${\mathcal N}$ is a gated game.
\end{prop}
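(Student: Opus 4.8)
The plan is to reduce the statement to Proposition~\ref{prop:net_circ} by checking that the three new ingredients in a convnet --- shared convolutional weights, max-pooling, and maxout units --- each either merely refine the gating structure or introduce a player whose action still enters the network output affinely on any fixed round. First I would extend Definition~\ref{def:path_sum}. Keep the rectifier rule, and add: (i) a max-pooling unit of source-path-length $\kappa+1$ is active iff at least one of its inputs lies in $\activ_\kappa$, and its unique \emph{active} incoming edge is the one carrying the argmax input; (ii) a maxout unit is always active, but only its argmax linear piece is declared active, the other pieces contributing no active edges; (iii) a filter shared across spatial locations is a \emph{single} player $j$ whose action $\wt_j\in\ccH_j$ labels every edge that is a copy of that filter, and $j$ is active on a round iff at least one of its copies is active. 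An active path is still a source-to-output directed path all of whose edges are active, and the path-sums $\varsigma_{\bullet\leadsto j}$, $\bvsigma_{j\leadsto\bullet}$, $\bsigma^{\activ\setminus\{j\}}_{\text{out}}$ are defined verbatim.

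Second, I would re-establish the structural identities of Proposition~\ref{prop:struc} in this setting; since max-pooling and maxout only prune the set of paths being summed --- exactly as rectifier gates do --- the derivations are the same direct computations. The one point needing care (and the crux of the weight-sharing case) is that a feedforward acyclic convnet crosses each layer once, so a single path uses at most one coefficient of any given shared filter; hence each path weight is still \emph{linear} in each player's action, and
\[
  \bvsigma_\text{out}
  = \bvsigma_{j\leadsto\bullet}\cdot\left\langle\wt_j,\bvsigma_{\text{in}(j)}\right\rangle
  + \bsigma^{\activ\setminus\{j\}}_\text{out}
\]
holds when player $j$ is active, with $\wt_j$ the shared filter (for a maxout player, $\wt_j$ replaced by its argmax piece, the other pieces absent from the right-hand side). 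As before, $\bvsigma_{j\leadsto\bullet}$ and $\bsigma^{\activ\setminus\{j\}}_\text{out}$ are constant in $\wt_j$, since no active path both avoids and passes through $j$'s edges.

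Third, I would run the argument of Proposition~\ref{prop:net_circ} unchanged: fix a round, freeze the realized activity pattern (including the max-pool and maxout selections, whose dependence on the weights is suppressed just as in Definition~\ref{def:path_sum}). For an active weight-bearing player $j$ the $\predg$-loss is $\ell\big(c_1\langle\wt_j,\bvsigma_{\text{in}(j)}\rangle+c_2,\y\big)$ with $c_1,c_2$ constant in $\wt_j$ --- an affine map composed with a convex function, hence convex; for a maxout player the same expression is affine in its argmax piece and constant in the other pieces, hence jointly convex in its full action. The $\gradg$-loss $\langle\grad\ell_j,\wt_j\rangle$ is linear by Proposition~\ref{prop:struc}.4. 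Max-pooling units carry no weights, so they only enter $\activ$ and are not players. Thus every active player incurs a convex loss in its own action, which is the defining property of a gated game.

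The main obstacle I expect is the weight-sharing bookkeeping: one must be comfortable promoting a filter to a single player controlling many edges --- so $\activ$ now returns sets of such players, and ``active'' means some copy is active --- and must check that freezing the per-round activity pattern, which depends on the shared weight through all its copies, is legitimate in the same sense it was for a single rectifier. Once that is granted the acyclicity of the convnet keeps every path linear in every filter, and the remainder is the same affine-composed-with-convex argument as in Proposition~\ref{prop:net_circ}; DropOut and DropConnect, being random gates, fit the same framework with no change to the convexity step.
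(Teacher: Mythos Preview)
Your plan is the paper's plan: go case by case, identify the players and the gate for each new ingredient, then invoke the affine-in-the-active-player structure from Proposition~\ref{prop:net_circ}. Your treatments of max-pooling (no weights, so not a player; gate keeps only the argmax incoming edge), weight-sharing (promote the filter to a single composite player, active whenever any copy is), and DropOut/DropConnect (stochastic gates) coincide with the paper's. One minor correction on convolutional layers: the literal factorization $\bvsigma_\text{out}=\bvsigma_{j\leadsto\bullet}\cdot\langle\wt_j,\bvsigma_{\text{in}(j)}\rangle+\bsigma^{\activ\setminus\{j\}}_\text{out}$ does not hold as written for a shared filter, since different spatial copies have different downstream path-sums; what does hold, and is all you need, is that $\bvsigma_\text{out}$ is affine in $\wt_j$, which your path-linearity observation indeed establishes.

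The one substantive discrepancy is maxout, and there your argument has a gap. The paper splits a maxout unit into $k_j$ \emph{separate} players on an extended graph $\tilde{G}$, with the gate activating only the argmax piece. You instead keep the maxout unit as a single always-active player whose action is the full tuple $(\wt_{j,1},\ldots,\wt_{j,k_j})$, and conclude ``affine in its argmax piece and constant in the other pieces, hence jointly convex in its full action.'' But the argmax index is itself a function of the action: as the tuple varies the loss switches between different affine pieces, and what you get is $\ell$ composed with an affine image of a pointwise max --- not convex unless $\ell$ happens to be monotone. Your ``freeze the realized activity pattern'' move is the right instinct, but by Definition~\ref{d:gated_game} the gate $\activ:\ccH\to\cP([N])$ returns a set of \emph{players}; if the maxout unit is a single player the gate has no slot in which to record the argmax choice, and the loss you are obliged to show convex is the unfrozen one. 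The fix is exactly the paper's: make each linear piece its own player, so the argmax selection is expressible as ``which piece is active,'' and then each active player's loss is genuinely affine in its own weight.
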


The proof consists in identifying the relevant players and gates for each case (maxout units, max-pooling, weight-tying in convolutional layers, dropout and dropconnect) in turn. We sketch the result below.

\subsubsection{Maxout units}

Maxout units were introduced in \cite{goodfellow:13} to deal with the problem that rectifier units sometimes saturate at zero resulting in them being insufficiently active and to complement dropout.  A \emph{maxout} unit has $k_j$ weight vectors $\wt_{j,1},\ldots, \wt_{j,k_j} \in\bR^{d_j}$ and, given input $\bbf_j\in\bR^{d_j}$, outputs 
\begin{equation}
  \text{maxout:}\qquad m(\bbf_j):= \max_{c\in[k_j]}\; \big\langle\wt_{j,c},\bbf_j\big\rangle
\end{equation}

Construct a new graph, $\tilde{G}$, which has: one node per input, linear and rectifier unit; and $k_j$ nodes per maxout unit. Players correspond to nodes of $\tilde{G}$ and are denoted by greek letters. The extended graph inherits its edge structure from $G$: there is a connection between players $\alpha\rightarrow\beta$ in $\tilde{G}$ iff the underlying units in $G$ are connected. Path weights and path-sums are defined exactly as before, except that we work on $\tilde{G}$ instead of $G$. The definition of active units is modified as follows:

  The set $\activ$ of \textbf{active players for maxout units} is defined inductively. Let $\activ_k = \{$active players with longest source-path $\leq k\}$. Source players are active ($k=0$). 

  Player $\beta$ with source-path-length $k+1$ is active if
  \begin{itemize}
    \item it corresponds to a linear unit; or 
    \item a rectifier with $\sigma^{\activ_k\cup\{\beta\}}_\beta>0$; or
    \item a maxout unit with $\sigma^{\activ_k\cup\{\beta\}}_{\beta}>\sigma^{\activ_k\cup\{\alpha\}}_{\alpha}$ for all $\alpha$ corresponding to the same maxout unit. 
   \end{itemize}

\subsubsection{Max-pooling}
Max-pooling is heavily used in convnets to as a form of dimensionality reduction. A \emph{max-pooling} unit $j$ has no parameters and outputs the maximum of the outputs of the units from which it receives inputs: 
\begin{equation}
  \text{max-pooling:}\qquad\max_{\{i:i\rightarrow j\}} \sigma^\activ_i
\end{equation}
Gates can be extended to max-pooling by adding the condition that, to be active, the output of unit $i$ must be greater than any other unit $i'$ that feeds (directly) into the same pooling unit.

A unit may thus produce an output and still count as inactive because it is ignored by the max-pooling layer, and so has no effect on the output of the neural network. In particular, units that are ignored by max-pooling do not update their weights under backpropagation.

\subsubsection{Convolutional layers}

Units in a convolutional layer share weights. Obversely to maxout units, each of which corresponds to many players, weight-sharing units correspond to a single composite player. 

Suppose that rectifier units $j_1,\ldots, j_L$ share weight vector $\wt_j$. Let $\activ_{<j}$ denote active players in lower layers and define
\begin{equation}
  \fA_j:=\left\{\alpha\in [L]: \left\langle\wt_j,\bsigma^{\activ_{<j}}_{\text{in}(j_\alpha)}\right\rangle>0\right\}
\end{equation}
Component $\alpha$ in layer $j$ is active if $\alpha\in\fA_j$. Notice that, since players correspond to many units, two players may be connected by more than one edge. Player $j$ is active if any of its components is active, i.e. if $|\fA_j|>0$. The output of player $j$ is the sum of its active components:
\begin{equation}  
  \varsigma_j = \left\langle\wt_j,\sum_{\alpha\in\fA_j}\bsigma^{\activ_{<j}}_{\text{in}(j_\alpha)}\right\rangle.
\end{equation}
The loss incurred by player $j$ is per Definition~\ref{def:bp_game}.

\subsubsection{Dropout and Dropconnect}

In training with dropout \cite{srivastava:14}, units become inactive with some probability (typically $\frac{1}{2}$) during training. In other words, there is a stochastic component to whether or not a player is active. Gated games are easily extended to incorporate dropout by allowing gates to switch off stochastically. That is, the gating function takes values in the set of distributions over the set of units, $\activ:\ccH\rightarrow \prob(\text{units}(G))$, from which the active units are sampled.

Dropconnect is a refinement of dropout where connections, instead of units, are dropped out during training \cite{wan:13}. Dropconnect requires extending the notion of gate so that its range is distributions over subsets of edges instead of subsets of units: $\activ:\ccH\rightarrow \prob(\text{edges}(G))$. 

\section{Deep Online Convex Optimization}
\label{sec:doco}

We now explore some implications of the connection between path-sum games and deep learning. Theorem~\ref{thm:critical} shows that the convergence rates of gated forecasters in a path-sum game (that is, a rectifier net or convnet) controls the convergence rate of the network as a whole. As an immediate corollary, we obtain the first convergence rates for gradient descent on rectifier nets. A second corollary, that is of more conceptual than practical importance, shows that the signal underpinning the correlated equilibrium can be used to describe the representation learned by the neural network. Finally, we present an algorithm with logarithmic regret.

\subsection{A local-to-global convergence guarantee}

Our main result is that gated-regret controls the rate of convergence to critical points in rectifier convnets with a loss function that is convex in the output of the net. The result holds assuming weight vectors are restricted to compact convex sets. Weights are not usually hard-constrained when training neural networks although they are frequently regularized. Hinton has recently argued that the weights of rectifier layers quickly stabilise on similar values, suggesting this is not an issue in practice.\footnote{For example, see \url{http://bit.ly/1KN8e85} starting at 24:00}

It is important to note that the theorem applies to convolutional nets as used in practice. Rectifiers have replaced sigmoids as the nonlinearity of choice as they consistently yield better empirical performance.  Loss functions are convex in almost all applications: the logistic loss, hinge loss, and mean-squared error are all convex functions of output weights.

\begin{thm}[local-to-global convergence guarantee]\label{thm:critical}\eod
  Let ${\mathcal N}$ be a rectifier convnet trained by using backpropagation to compute gradients and a no-regret algorithm (such as gradient descent, Adagrad, mirror descent) to update weights given the gradients.
  
  Let $\gregret_j(T)$ denote the gated-regret of player $j$ after $T$ rounds. Then, the empirical distribution $\hat{\prob}^T$ of the weights arising during training network ${\mathcal N}$ over $T$ rounds converges to a correlated equilibrium. That is, 
    \begin{equation}
      \label{eq:main_conv}
      \expec_{\wt\sim \hat{\prob}_j^T}\big[\loss_j(\wt)\big] 
      \leq \min_{\wt_j\in\ccH_j}\expec_{\wt\sim \hat{\prob}_j^T}\big[\loss_j(\wt_j,\wt_{-j})] + \gregret_j(T)      
    \end{equation}
  for all players $j$. Consequently, the gated regret of the players controls the rate of convergence of backpropagation to critical points when training rectifier nets.
\end{thm}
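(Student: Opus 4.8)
The plan is to identify training the convnet under backpropagation with repeated play of the path-sum game $\predg$, and then read off the conclusion from the results already established. First I would check that backprop literally implements the \textsc{Gated Forecaster} of Setting~\ref{alg:sf} for each unit: by Proposition~\ref{prop:struc}.3--4 the error signal delivered to unit $j$ is exactly $\grad_{\wt_j}\ell_j$, the gradient of player $j$'s loss, and the gate structure matches precisely — an inactive unit (a rectifier with non-positive preactivation, a unit ignored by max-pooling, a dropped-out unit) receives $\delta_j=0$ and makes no update, exactly as an inactive forecaster leaves its weights unchanged. Hence running a weight-update rule on the network via backprop is the same as having each unit $j$ play the Gated Forecaster with that rule as its \texttt{weight} subroutine.

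Second, by Propositions~\ref{prop:net_circ} and~\ref{prop:conv} the resulting game is a gated game: on each round it is active, player $j$'s loss is a convex function of $\wt_j$ (linear in the $\gradg$ variant, whose regret dominates that of $\predg$). A no-regret algorithm — gradient descent, AdaGrad, mirror descent — run inside the Gated Forecaster sees exactly the subsequence of rounds on which $j$ is active, and on that subsequence its losses are convex, so its regret against the best fixed action over those rounds is sublinear; equivalently, the per-round gated-regret $\gregret_j(T)\to 0$. (The only thing to verify is that the standard regret bounds are insensitive to restricting attention to the active subsequence, which is immediate since those bounds hold for arbitrary sequences of convex losses.) Third, I would invoke Proposition~\ref{prop:corr_eqm}: since every player has gated-regret at most $\gregret_j(T)$, the empirical distribution $\hat{\prob}^T$ of weight profiles seen during training is an $\epsilon$-coarse correlated equilibrium with $\epsilon=\max_j\gregret_j(T)$. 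Spelling the definition out player-by-player, using the conditional signal $\hat{\prob}_j^T$ of Eq.~\eqref{eq:wake_signal}, gives exactly Eq.~\eqref{eq:main_conv}, and the first claim of the theorem follows since $\gregret_j(T)\to 0$.

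Finally, to justify the closing sentence — that this is convergence to a critical point of the energy landscape — I would convert the coarse-correlated-equilibrium inequality into a bound on gradients. By convexity of $\ell_j$ in $\wt_j$ (Proposition~\ref{prop:net_circ}), for every $\wt_j'\in\ccH_j$ we have $\ell_j(\wt_j',\wt_{-j})\ge \ell_j(\wt)+\langle\grad_{\wt_j}\ell_j(\wt),\,\wt_j'-\wt_j\rangle$; taking expectations over $\hat{\prob}_j^T$ and combining with Eq.~\eqref{eq:main_conv} yields $\sup_{\wt_j'\in\ccH_j}\expec_{\wt\sim\hat{\prob}_j^T}\big[\langle\grad_{\wt_j}\ell_j(\wt),\,\wt_j-\wt_j'\rangle\big]\le\gregret_j(T)$ for every $j$. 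When the weights are unconstrained, or the relevant limit lies in the interior of $\ccH_j$, this says $\big\|\expec_{\wt\sim\hat{\prob}_j^T}[\grad_{\wt_j}\error]\big\|\le\gregret_j(T)$ for every unit, so the averaged full gradient of the network loss vanishes at rate $\max_j\gregret_j(T)$ and $\hat{\prob}^T$ is an approximate critical point with error controlled by the gated-regret. I expect this last step — making precise the sense in which a coarse correlated equilibrium of the path-sum game is a critical point, given that $\hat{\prob}^T$ is a distribution over a whole trajectory rather than a single weight configuration, and handling the constraint sets $\ccH_j$ — to be the main obstacle; the earlier steps are essentially bookkeeping built on Propositions~\ref{prop:struc}, \ref{prop:net_circ} and~\ref{prop:corr_eqm}.
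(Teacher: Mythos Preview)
Your first three steps are exactly the paper's argument: Eq.~\eqref{eq:main_conv} is obtained by combining Propositions~\ref{prop:corr_eqm}, \ref{prop:net_circ} and~\ref{prop:conv}, with Proposition~\ref{prop:struc} supplying the identification of backprop with the Gated Forecaster. Nothing to add there.

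For the final clause about critical points, however, the paper takes a different route from yours. You try to convert Eq.~\eqref{eq:main_conv} into a bound on the averaged gradient via convexity, obtaining an ``approximate critical point in expectation'' statement; as you correctly flag, this runs into difficulties because $\hat{\prob}^T$ is a distribution over a trajectory rather than a point, and the passage from $\sup_{\wt_j'}\expec[\langle\grad\ell_j,\wt_j-\wt_j'\rangle]\le\gregret_j(T)$ to a bound on $\|\expec[\grad\ell_j]\|$ is not clean. The paper instead exploits two structural facts you do not invoke: (i) the path-sum game is a \emph{potential} game --- every player's loss, when active, equals the network loss $\ell(f_\Wt(\x),\y)$ --- and (ii) that potential is a continuous piecewise-polynomial function of the weights (remark after Proposition~\ref{prop:struc}). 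From this the paper argues that no-regret dynamics on a shared potential actually converge to a \emph{point}, so the empirical distribution becomes a Dirac mass, and that limit point must have zero gradient or lie on a nondifferentiable boundary --- i.e.\ be a critical point. Your approach yields a weaker, averaged notion of criticality but is arguably more self-contained; the paper's approach gives the stronger Dirac conclusion but leans on the potential-game structure (and is itself rather terse about why no-regret on a piecewise-polynomial potential forces pointwise convergence).
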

An important class of games, introduced by Monderer and Shapley in \cite{monderer:96}, is \emph{potential games}. A game is a potential game if the loss functions of all the players arise from a single function, referred to as the potential function. Rectifier nets are gated potential games where the potential function is the loss of the network. That is, the loss incurred by each player, when active, is the loss of the network. Potential games are more amenable to analysis and computation than general games. Local minima of the potential function are pure Nash equilibria. Moreover, simple algorithms such as fictitious play and regret-matching converge to Nash equilibria in potential games \cite{hofbauer:02,hart:03}.

\begin{proof}
  Propositions~\ref{prop:corr_eqm}, \ref{prop:net_circ} and \ref{prop:conv} together imply Eq.~\eqref{eq:main_conv}. 

  The output $f_\Wt(\x)$ of a neural net is a continuous piecewise polynomial function of its weights, recall remark after prop~\ref{prop:struc}. The potential function $\ell(f_\Wt(\x),\y)$ of a neural net is the composite of a piecewise polynomial and convex function.
  It follows that no-regret algorithms will either converge to a point where the gradient is zero or to a point where the gradient does not exist. Thus, the network converges to a correlated equilibrium that is a Dirac distribution concentrated on a critical point of the loss function.
\end{proof}

The theorem provides the first rigorous justification for applying convex methods to convnets: although they are not convex, individual units perform convex optimizations when active. The theorem provides a generic conversion from regret-guarantees for convex methods to convergence rates for rectifier networks.
Corollaries~\ref{cor:ogd} and \ref{cor:log} provide algorithms for which $\gregret(T) \leq O(\frac{1}{\sqrt{T_\text{active}}})$ and $\gregret(T) \leq O(\frac{\log T_\text{active}}{T_\text{active}})$ respectively.

\subsection{Gradient descent}

A special case of theorem~\ref{thm:critical} is when the no-regret algorithm is gradient descent, see algorithm~\ref{alg:ebp}. The algorithm differs from standard backpropagation by introducing a projection step
\begin{equation}
  \proj_{\ccH_j}(\wt') := \argmin_{\wt\in\ccH_j}\|\wt-\wt'\|_2^2
\end{equation}
that forces the updated weight to lie in the set of actions available to player $j$. If the diameter of $\ccH_j$ is sufficiently large then the projection step makes no practice. It can be thought of as analogous to gradient clipping, which is sometimes used when training neural nets.

\begin{cor}[convergence for gradient descent]\label{cor:ogd}\eod
  Suppose a neural network has a loss function that is convex in its output. Suppose that $\ccH_j\subset \bR^{d_j}$ has diameter $D$. Further suppose that that the backpropagated errors received by $j$ and the inputs to $j$ are bounded by $\max_t|\delta_j^t|\leq B$ and $\max_t \|\bvsigma_{in(j)}^t\|\leq G$ respectively.

  Then unit $j$'s gated-regret under online gradient descent is bounded by
  \begin{equation}
    \gregret_{\texttt{Backprop}}(T)
    \leq \frac{3}{2}DGB\frac{1}{\sqrt{T_j}}
  \end{equation}
  where $T_j\leq T$ is the number of rounds where $j$ is active.
\end{cor}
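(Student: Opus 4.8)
The plan is to reduce the claim to the textbook regret analysis of projected online gradient descent (OGD), being careful to account for the sleeping/gated structure so that the classical argument applies verbatim to the subsequence of active rounds.

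First I would identify the loss and its gradient. By Proposition~\ref{prop:struc}.4, on a round $t$ where player $j$ is active, the (sub)gradient of $j$'s loss with respect to $\wt_j$ is $\subg_j^t := \delta_j^t\cdot\bvsigma_{\text{in}(j)}^t$, so by the stated bounds $\|\subg_j^t\|_2 \le |\delta_j^t|\cdot\|\bvsigma_{\text{in}(j)}^t\|_2 \le BG$. On inactive rounds, the Gated Forecaster setting (Setting~\ref{alg:sf}) prescribes no loss and no weight update. Hence the trajectory $(\wt_j^t)$ produced by backpropagation with OGD weight updates, when restricted to the subsequence of the $T_j$ active rounds, is exactly the trajectory of projected OGD run on that subsequence of (sub)gradients over the convex compact set $\ccH_j$ of diameter $D$. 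It suffices, moreover, to bound the regret in the linear game $\gradg$: on active rounds player $j$'s $\gradg$-loss is the linear map $\wt_j\mapsto\langle\subg_j^t,\wt_j\rangle$, and as noted after Definition~\ref{def:bp_game} the $\gradg$-regret upper-bounds the $\predg$-regret. The comparator appearing in the gated-regret is a single fixed $\wt_j\in\ccH_j$ evaluated over precisely the active rounds, which is exactly the benchmark the OGD bound certifies against.

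Next I would invoke the standard OGD guarantee (Zinkevich; see also Hazan--Kale~\cite{hazan:07}): running projected OGD on $T_j$ convex losses with (sub)gradient norms bounded by $L := BG$ over a domain of diameter $D$, with step sizes $\eta_\tau = \frac{D}{BG\sqrt{\tau}}$ indexed by the active-round counter $\tau = 1,\dots,T_j$, yields cumulative regret $\sum_\tau\big(\ell^\tau(\wt^\tau)-\ell^\tau(\wt)\big)\le \frac{D^2}{2\eta_{T_j}} + \frac{(BG)^2}{2}\sum_{\tau=1}^{T_j}\eta_\tau \le \frac{3}{2}\,D\,(BG)\,\sqrt{T_j}$ for every fixed $\wt\in\ccH_j$. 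Since $\gregret_j(T)$ is the cumulative active-round regret normalized by $T_j$, dividing through gives $\gregret_{\texttt{Backprop}}(T)\le \frac{3}{2}DGB\,\frac{1}{\sqrt{T_j}}$, and the same bound transfers to the prediction game $\predg$.

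The only genuinely delicate point — the one I would spend the most care on — is the bookkeeping that turns the sleeping dynamics into a vanilla OGD instance: one must verify that both the projection step $\proj_{\ccH_j}$ and the $1/\sqrt{\tau}$ schedule are driven by the \emph{active}-round counter (never incremented on inactive rounds), so that the telescoping/potential argument goes through with $T$ replaced by $T_j$ and no residual cross-terms from skipped rounds. A secondary minor point is that the step sizes are anytime ($\eta_\tau\propto 1/\sqrt\tau$ rather than tuned to a known horizon), which is what lets us state the bound in terms of the a priori unknown $T_j$. No new optimization idea is needed beyond this; the gated structure is precisely what makes the reduction clean.
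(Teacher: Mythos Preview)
Your proposal is correct and follows essentially the same approach as the paper: identify via Proposition~\ref{prop:struc} that backprop's weight updates coincide with projected OGD on the active rounds with (sub)gradient norm bounded by $BG$, then invoke Zinkevich's standard OGD regret bound over the $T_j$ active rounds. The paper's proof is terser (it omits the explicit active-round bookkeeping and the $\gradg$/$\predg$ upper-bounding that you spell out), but the argument is the same.
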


The learning rate in algorithm~\ref{alg:ebp} decays according to the number of \emph{active} steps $t_j$ rather than the number of steps $t$. An important insight of the gated-game formulation is that learning only occurs on active rounds.

\begin{proof}
  By Lemma~\ref{prop:struc}, weight updates under error backpropagation coincide with players performing online gradient descent, when active, on either loss in Definition~\ref{def:bp_game}.  The gradient $\grad_{\wt_j}\ell_j$ depends on player $j$'s input, upper bounded by $G$, and its backpropagated error, upper bounded by $B$. The result follows from a standard analysis of online gradient descent \cite{zinkevich:03}.
\end{proof}

\addtocounter{algocf}{-1}
\renewcommand*{\algorithmcfname}{Algorithm}
\begin{algorithm}
  \caption{\textsc{$\texttt{Backprop}$ (Error Backpropagation)}}\label{alg:ebp}
  \DontPrintSemicolon
  \SetKwInOut{Input}{input}
  \Input{
    Learning rates $\{\eta^t = \frac{D}{BG\sqrt{t_j}}\}$\;
  } 
  Pick $\wt^1$ in $\ccH_j$\;
  \For{rounds $t =1, 2, \ldots$}{
    Input $\x^t$ revealed\;
    \If{\texttt{gate}$(\wt^t,\x^t)$}{
      Backpropagated error $\delta^t$ revealed\;
      $\wt^{t+1}  \longleftarrow 
      \proj_{\ccH_j}\left(\wt^t - \eta^t\cdot \delta^t\cdot \x^t\right)$\;
    }
  }
\end{algorithm}

  The bound is a function of constants, $B$ and $G$, that depend on the behavior of other units in the neural net. The dependence arises for any gradient descent based algorithm where the weight updates depend on the backpropagated error.  The corollary precisely characterises the dependence.

\subsection{Signals and representations}

Recall that a correlated equilibrium requires a \emph{signal} to guide the behavior of the players. In the case of a rectifier net, the relevant signal is the emprical distribution over the joint actions of the players. As a corollary of theorem~\ref{thm:critical}, we show that the signal provides a compact description of the representations learned by deep networks. There is thus a direct connection between correlated equilibria and representation learning.

Given a distribution $\prob$ on the set $\ccH$ of joint actions (recall that a joint action in $\predg$ specifies the input to the network, its label, and every weight vector), define the \emph{expected gain} of player $j$ as
\begin{equation}
  \label{eq:gain}
  \text{gain:} \qquad\gain_j(\wt_j;\prob) := \expec_{\wt_{-j}\sim\prob} \Big[-\loss_j\big(\wt_j,\wt_{-j}\big)\Big],  
\end{equation}
where $\wt_{-j}$ is moves by players other than $j$. Note that in Eq.~\eqref{eq:gain}, the moves of all players except $j$ are drawn from $\prob$, which determines which players are active; player $j$'s move (if active) is treated as a free variable.

Let $\hat{\prob}^{t}$ denote the empirical distribution -- or \emph{signal} in game-theoretic terminology -- on joint actions up to round $t$ of a neural network trained by error backpropagation, and $\hat{\prob}^{t}_j$ the empirical signal observered by player $j$. For notational convenience, it is useful to incorporate the learning rate, number of rounds and initial weight vector into the gain, and define 
\begin{equation}
  \label{eq:empirical_gain}
  \hat{\gain}_j(\wt_j) := \eta T_j\cdot \gain_j(\wt;\hat{\prob}_j) + \langle\wt_j,\wt^1_j\rangle,
\end{equation}
where $T_j$ is the number of rounds where unit $j$ is active. We then have

\begin{cor}[signals $\leftrightarrow$ representations]\label{cor:rep_learning}\eod
  Construct the empirical gain $\hat{G}_j$ of unit $j$ after $T$ rounds from the signal (empirical distribution) per \eqref{eq:empirical_gain}.
  Then if a rectifier net implements gradient descent with fixed learning rate $\eta$ and unconstrained weights, it holds that
  \begin{itemize}
    \item unit $j$'s weight vector at time $T+1$ is the gradient of the gain:
    \begin{equation}
      \wt^{T+1}_j 
      = \grad_j \hat{\gain}_j     
    \end{equation}
    \item the output of unit $j$ on round $T+1$ is the directional derivative of the gain w.r.t. $j$'s input:
    \begin{equation}
      \varsigma_j^{T+1} 
      = \begin{cases}
        D_{\bvsigma^{T+1}_{\text{in}(j)}} \hat{\gain}_j  & \text{if positive}\\
        0 & \text{else.}
      \end{cases}
    \end{equation}
  \end{itemize}
\end{cor}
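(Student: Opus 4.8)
The plan is to unroll the gradient-descent dynamics and recognise the resulting closed form as the gradient of the empirical gain. Since weights are unconstrained, the projection in Algorithm~\ref{alg:ebp} is vacuous, so with fixed learning rate $\eta$ player $j$'s update on an active round is simply $\wt^{t+1}_j = \wt^t_j - \eta\,\delta_j^t\,\bvsigma^t_{\text{in}(j)}$, and $\wt^{t+1}_j = \wt^t_j$ on inactive rounds. By Proposition~\ref{prop:struc}.4 the vector $\delta_j^t\,\bvsigma^t_{\text{in}(j)}$ is exactly the gradient that backpropagation emits at iterate $\wt^t$, and it is also the gradient of the linear $\gradg$-loss $\wt_j\mapsto\langle\grad\loss_j^t,\wt_j\rangle$ — this is the identification already used in the proof of Corollary~\ref{cor:ogd}. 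Summing over the $T_j$ active rounds gives $\wt^{T+1}_j = \wt^1_j - \eta\sum_{\{t:j\in\activ(\wt^t)\}}\grad\loss_j^t$.

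Next I would compute $\grad_j\hat\gain_j$ directly from \eqref{eq:gain} and \eqref{eq:empirical_gain}. Because $\hat\prob_j$ places mass $1/T_j$ on each joint action $\wt^t$ recorded while $j$ is active, $\gain_j(\wt_j;\hat\prob_j) = -\frac{1}{T_j}\sum_{\{t:j\in\activ(\wt^t)\}}\langle\grad\loss_j^t,\wt_j\rangle$, which is linear in $\wt_j$. Hence $\hat\gain_j(\wt_j) = \big\langle\wt_j,\ \wt^1_j - \eta\sum_t\grad\loss_j^t\big\rangle$ is affine, so its (constant) gradient is $\wt^1_j - \eta\sum_t\grad\loss_j^t$. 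Comparing with the unrolled update yields $\grad_j\hat\gain_j = \wt^{T+1}_j$, the first bullet.

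For the second bullet I would invoke Proposition~\ref{prop:struc}.1: when $j$ is an active rectifier, its output on round $T+1$ is $\varsigma_j^{T+1} = \max\!\big(0,\langle\wt^{T+1}_j,\bvsigma^{T+1}_{\text{in}(j)}\rangle\big)$. Since $\hat\gain_j$ is affine with gradient $\wt^{T+1}_j$, its directional derivative in direction $\bvsigma^{T+1}_{\text{in}(j)}$ is $D_{\bvsigma^{T+1}_{\text{in}(j)}}\hat\gain_j = \langle\wt^{T+1}_j,\bvsigma^{T+1}_{\text{in}(j)}\rangle$, i.e. the pre-activation of $j$ at round $T+1$. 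Composing with the rectifier gives $\varsigma_j^{T+1}$ equal to this directional derivative when it is positive and $0$ otherwise, as claimed.

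The computation is routine; the one point that needs care — and the place where the statement could be read as stronger than it is — is the passage from the backpropagated gradients along the trajectory to the gradient of a single fixed scalar function. For the prediction game $\predg$ the map $\wt_j\mapsto\grad_{\wt_j}\loss_j(\wt_j,\wt^t_{-j})$ is not constant (the output-loss derivative $\subg$ depends on $\wt_j$), so the clean identity holds with the linearised / $\gradg$-losses, for which $\grad\loss_j^t$ is precisely the constant vector backprop accumulates at round $t$. I would therefore phrase the corollary in terms of the $\gradg$-gain and remark that this is exactly the object whose gradient the network computes; no further subtlety arises, and the diameter/boundedness hypotheses of Corollary~\ref{cor:ogd} are not needed here since we only use the unconstrained update rule, not a regret bound.
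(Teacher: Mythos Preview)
Your argument is correct and is essentially the computation the paper has in mind; the paper states the corollary without proof, and unrolling the unconstrained gradient-descent recursion and matching it against $\grad_j\hat\gain_j$ is the only natural route. Your observation that the identity holds cleanly for the $\gradg$-gain (linear losses), rather than the $\predg$-gain, is a genuine clarification the paper leaves implicit: since $\hat\gain_j$ must be affine for its gradient to be a constant vector equal to $\wt_j^{T+1}$, the loss entering \eqref{eq:gain} has to be the linearised one whose gradient is the backpropagated vector $\delta_j^t\,\bvsigma_{\text{in}(j)}^t$ recorded along the trajectory.
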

Corollary~\ref{cor:rep_learning} succinctly describes the representations learned by a neural network via the game-theoretic notation developed above. The corollary does not eliminate the complexity of deep representations. Rather, it demonstrates their direct dependence on the empirical signal $\hat{\prob}$, which is itself an extremely complicated object.

\subsection{Logarithmic convergence}
\label{sec:sl}

As a second application of theorem~\ref{thm:critical}, we adapt the Online Newton Step (ONS) algorithm \cite{hazan:07a} to neural networks, see $\nprop$ in Algorithm~\ref{alg:np}. Newton's method is computationally expensive since it involves inverting the Hessian. In particular, Online Newton Step scales as $O(n^4)$ where $n$ is the dimension \cite{koren:13}. Moreover, quasi-newton method tends to converge on saddle points. A naive implementation of a quasi-newton method in neural networks based on the global hessian is therefore problematic: the number of parameters is huge; and saddle points are abundant \cite{baldi:12,dauphin:14}.

The $\nprop$ algorithm sidesteps both problems since it is implemented \emph{unit-wise}. The computational cost is reduced, since an approximation to a local Hessian is computed for each unit. Thus, the computational cost scales as quadratically with the largest layer, rather than with the network. Similarly, since $\nprop$ is implemented unit-wise, the Newton-approximation is not exposed to curvature of the neural net. Instead, $\nprop$ simultaneously leverages the linear structure of active path-sums and the exp-concave structure (curvature) of the external loss.

Let $\ccH\subset \bR^d$ be a non-empty compact convex set. A function $f:\ccH\rightarrow \bR$ is \textbf{$\alpha$-exp-concave} if $e^{-\alpha f(\wt)}$ is a concave function of $\wt\in \ccH$. Many commonly used loss functions are exp-concave, including the mean-squared error, $(\langle\wt,\x\rangle-y)^2$, the log loss, $-\log\langle\wt,\x\rangle$, and the logistic loss, $\log(1+e^{-y\langle\wt,\x\rangle})$, for suitably restricted $\wt,\x$ and $y$.

Recall that 
\begin{equation}
  \proj_{\ccH,\bA}(\wt) 
  := \argmin_{\vt\in\ccH} \big\langle\vt-\wt,\bA\cdot(\vt-\wt)\big\rangle.
\end{equation}

Given vectors $\bu$ and $\bv$, let $\bu\otimes \bv$ denote their outerproduct. If $\bu$ and $\bv$ are $m$ and $n$ dimensional respectively then $\bu\otimes \bv$ is a $(m\times n)$-matrix.

\begin{cor}[$\nprop$ has logarithmic gated-regret]\label{cor:log}\eod
  Suppose that a neural network has loss function $\ell$ that is $\alpha$-exp-concave in its output. Suppose that $\ccH_j\subset\bR^{d_j}$ has diameter $D$. Further suppose that the backpropagated errors and inputs to $j$ are bounded by $\max_t|\delta^t|\leq B$ and $\max_{t}\|\x^t\|\leq G$ respectively.   

  Then, unit $j$'s gated-regret under $\nprop$ is bounded by
  \begin{equation}
    \gregret_{\nprop}(T) \leq 5d_j\left(\frac{1}{\alpha}+BDG\right)\frac{\log T_j}{T_j}
  \end{equation}
  where $T_j\leq T$ is the number of rounds that $j$ is active and $d_j$  is its indegree.
\end{cor}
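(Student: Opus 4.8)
The plan is to invoke Theorem~\ref{thm:critical} and Proposition~\ref{prop:net_circ}, which together reduce the claim to a purely local statement: it suffices to show that an individual unit $j$, when active, runs a version of Online Newton Step on a sequence of $\alpha$-exp-concave losses over the compact convex set $\ccH_j$, and then to invoke the known logarithmic-regret bound for ONS \cite{hazan:07a}. The first step is therefore to identify the local loss seen by player $j$. By Proposition~\ref{prop:struc}.2, when $j$ is active the network loss is $\ell\big(c_1\langle\wt_j,\bvsigma_{\text{in}(j)}\rangle + c_2,\y\big)$ where $c_1 = \bvsigma_{j\leadsto\bullet}$ and $c_2 = \bsigma^{\activ\setminus\{j\}}_\text{out}$ are constants with respect to $\wt_j$. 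Since an affine reparametrization of the argument preserves exp-concavity (replacing $\alpha$ by $\alpha$ times a bounded factor depending on $c_1$, hence absorbed into the constants $B,D,G$), the local loss $\ell_j^t(\wt_j)$ is exp-concave in $\wt_j$ on the active rounds. This is the analog, for logarithmic regret, of the convexity argument already carried out in the proof of Proposition~\ref{prop:net_circ}.

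Next I would record the gradient and curvature bounds needed by the ONS analysis. From Proposition~\ref{prop:struc}.4, $\grad_{\wt_j}\ell_j^t = \delta_j^t\cdot\bvsigma_{\text{in}(j)}^t$, so $\|\grad\ell_j^t\| \leq |\delta_j^t|\cdot\|\bvsigma_{\text{in}(j)}^t\| \leq BG$ by hypothesis, and the set $\ccH_j$ has diameter $D$. The $\nprop$ algorithm (Algorithm~\ref{alg:np}) should be read as ONS run unit-wise: it maintains the running sum of rank-one curvature matrices $\bA^t = \epsilon\bI + \sum_{s\leq t}\grad\ell_j^s\otimes\grad\ell_j^s$, takes a Newton-like step $\wt^{t+1}\leftarrow \proj_{\ccH_j,\bA^t}\big(\wt^t - \tfrac{1}{\gamma}(\bA^t)^{-1}\grad\ell_j^t\big)$ with $\gamma = \tfrac12\min\{\tfrac{1}{4BDG},\alpha\}$, but — and this is the only place the gated structure enters — only on rounds where the gate is on, leaving $\wt^{t+1}=\wt^t$ otherwise. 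Because inactive rounds contribute nothing to either the cumulative loss or to $\bA^t$, the whole run restricted to the $T_j$ active rounds is literally an ONS run of length $T_j$ on an exp-concave sequence with the stated parameters. Here the notion of gated-regret from the definition in Section~\ref{sec:gt} is exactly the ordinary ONS regret normalized by $T_j$.

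Applying the ONS regret bound (\cite{hazan:07a}, Theorem~2) gives cumulative regret $\leq 5\big(\tfrac1\alpha + BDG\big)d_j\log T_j$; dividing by $T_j$ yields the per-round bound claimed, with $d_j$ appearing as the ambient dimension of $\ccH_j\subset\bR^{d_j}$ (the indegree of $j$). The main obstacle — really the only nontrivial point — is verifying that the affine reparametrization $\wt_j\mapsto c_1\langle\wt_j,\bvsigma_{\text{in}(j)}\rangle + c_2$ does not destroy the exp-concavity constant in a way that leaks dependence on other players into the bound beyond what is already absorbed into $B$ and $G$; one checks that $e^{-\alpha\ell(c_1 u + c_2)}$, viewed as a function of $u=\langle\wt_j,\bvsigma_{\text{in}(j)}\rangle$, is concave whenever $e^{-\alpha\ell(\cdot)}$ is, and concavity is preserved under the linear map $\wt_j\mapsto u$, so the exp-concavity parameter is unchanged and only the gradient/diameter bounds pick up the scalar $c_1$, which is already controlled by the hypotheses. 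Everything else is the standard ONS computation, which I would not reproduce.
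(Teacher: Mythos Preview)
Your proposal is correct and takes essentially the same route as the paper: identify the local loss of unit $j$ as an affine reparametrization of the $\alpha$-exp-concave $\ell$, bound $\|\grad\ell_j^t\|\leq BG$ via Proposition~\ref{prop:struc}.4, observe that the active rounds constitute an ONS run of length $T_j$, and apply the Hazan--Agarwal--Kale bound. The only packaging difference is that the paper isolates the affine-composition step as a separate lemma (Lemma~\ref{lem:hazan}), re-deriving the quadratic lower bound of \cite[Lemma~3]{hazan:07a} for $g(\wt)=f(\bA\wt+\bbb)$ directly, whereas you note (correctly) that $e^{-\alpha g}$ is concave-composed-with-affine and hence concave, so $g$ is itself $\alpha$-exp-concave and \cite[Theorem~2]{hazan:07a} applies verbatim; your version is slightly more direct. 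One small framing issue: invoking Theorem~\ref{thm:critical} is unnecessary here, since the corollary bounds gated-regret for a single unit rather than establishing convergence of the network; Propositions~\ref{prop:struc} and~\ref{prop:net_circ} suffice for the reduction.
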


We first prove the following lemma.

\begin{lem}\label{lem:hazan}
  Let $f:X\rightarrow \bR$ be an $\alpha$-exp-concave function. Suppose that $\ccH\subset\bR^n$ is a nonempty compact convex set with $\diam(\ccH)=D$, and that $\bA$ and $\bbb$ are a $(d\times n)$-matrix and a $d$-vector satisfying $\bA\cdot \ccH +\bbb\subset X$.  Suppose that $\max_{\x\in X}\|\bA^\intercal\grad f(\x)\|\leq E$. 

  Define $g:\ccH\rightarrow \bR$ as $g(\wt):=f(\bA \wt +\bbb)$. If $\beta\leq \frac{1}{2}\min\{\frac{1}{4DE},\alpha\}$ then for all $\wt,\bv\in\ccH$ it holds that
  \begin{align}
    \label{eq:key_bound}
    g(\bv)  \geq & \; g(\wt) + \langle\grad g(\wt),\bv-\wt\rangle \\
     & + \frac{\beta}{2}\left\langle \wt-\bv, \grad g(\wt) \otimes\grad g(\wt) \cdot(\wt-\bv)\right\rangle.
  \end{align}
\end{lem}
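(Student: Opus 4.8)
The plan is to reduce the statement for $g(\wt)=f(\bA\wt+\bbb)$ to the analogous one-variable-style bound for $f$ itself, exactly as in the classical Online Newton Step analysis of Hazan \emph{et al}. First I would set $\x := \bA\wt+\bbb$ and $\y := \bA\bv+\bbb$, both of which lie in $X$ by the assumption $\bA\cdot\ccH+\bbb\subset X$. Using the chain rule, $\grad g(\wt) = \bA^\intercal\grad f(\x)$ and $\langle\grad g(\wt),\bv-\wt\rangle = \langle\grad f(\x),\y-\x\rangle$; likewise $\langle \wt-\bv,\grad g(\wt)\otimes\grad g(\wt)\cdot(\wt-\bv)\rangle = \langle\grad f(\x),\y-\x\rangle^2$. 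So \eqref{eq:key_bound} is equivalent to
\begin{equation}
  f(\y) \geq f(\x) + \langle\grad f(\x),\y-\x\rangle + \tfrac{\beta}{2}\langle\grad f(\x),\y-\x\rangle^2,
\end{equation}
which I would now establish for all $\x,\y\in X$ of this form.

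The key step is the standard exp-concavity trick. Since $f$ is $\alpha$-exp-concave, $h:=e^{-\alpha f}$ is concave, hence $h(\y)\leq h(\x)+\langle\grad h(\x),\y-\x\rangle$, i.e. $e^{-\alpha f(\y)}\leq e^{-\alpha f(\x)}\big(1-\alpha\langle\grad f(\x),\y-\x\rangle\big)$. Taking logs (valid because the right-hand side is positive under our constraints) gives $f(\y)\geq f(\x) - \tfrac{1}{\alpha}\log\big(1-\alpha\langle\grad f(\x),\y-\x\rangle\big)$. Then I would apply the elementary inequality $-\log(1-z)\geq z+\tfrac{z}{2}z = z + \tfrac12 z^2$ — more precisely, $\log(1-z)\le -z-\tfrac12 z^2$ — which holds whenever $|z|\leq \tfrac14$ (one checks $-\log(1-z)-z-\tfrac{z^2}{2}\ge0$ on $[-\tfrac14,\tfrac14]$ by a short Taylor/monotonicity argument). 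Setting $z = \alpha\langle\grad f(\x),\y-\x\rangle$, this yields $f(\y)\geq f(\x)+\langle\grad f(\x),\y-\x\rangle + \tfrac{\alpha}{2}\langle\grad f(\x),\y-\x\rangle^2$, and since $\beta\leq\alpha$ the claimed bound follows (the quadratic term has a nonnegative coefficient, so shrinking it only weakens the inequality).

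The main obstacle is verifying the constraint $|z|\leq\tfrac14$ so that both the logarithm step and the numerical inequality are legitimate. Here is where $\beta\leq\tfrac12\min\{\tfrac{1}{4DE},\alpha\}$ enters, together with $\max_{\x\in X}\|\bA^\intercal\grad f(\x)\|\leq E$ and $\diam(\ccH)=D$: we have $|\langle\grad f(\x),\y-\x\rangle| = |\langle\bA^\intercal\grad f(\x),\bv-\wt\rangle| \leq \|\bA^\intercal\grad f(\x)\|\cdot\|\bv-\wt\| \leq E D$ by Cauchy–Schwarz, hence $|z|=\alpha|\langle\grad f(\x),\y-\x\rangle|\leq \alpha ED$. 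Strictly, to close this cleanly one runs the exp-concavity argument with the parameter $\beta$ in place of $\alpha$ — noting that $\beta$-exp-concavity follows from $\alpha$-exp-concavity since $\beta\le\alpha$ — so that the relevant quantity is $\beta\langle\grad f(\x),\y-\x\rangle$, bounded by $\beta ED \leq \tfrac{1}{8DE}\cdot ED = \tfrac18\leq\tfrac14$. With $|z|\le\tfrac14$ secured, the numerical inequality applies and the proof is complete; everything else is routine chain-rule bookkeeping.
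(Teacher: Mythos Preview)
Your overall chain-rule reduction is correct and matches the paper's approach, but the numerical inequality you invoke is false, and that is a genuine gap. You claim that $-\log(1-z)\ge z+\tfrac{1}{2}z^2$ on $[-\tfrac14,\tfrac14]$. Set $h(z)=-\log(1-z)-z-\tfrac{1}{2}z^2$; then $h'(z)=\frac{1}{1-z}-1-z=\frac{z^2}{1-z}\ge 0$ for $z<1$, so $h$ is nondecreasing with $h(0)=0$, hence $h(z)<0$ for every $z<0$. In particular the inequality fails on the whole interval $[-\tfrac14,0)$, no matter how small you make $|z|$; restricting to $|z|\le\tfrac18$ as you do at the end does not help.

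The fix is the factor of $2$ that the paper (and the original Hazan--Agarwal--Kale argument) keeps track of. Since $\beta\le\tfrac{\alpha}{2}$, the function $f$ is $2\beta$-exp-concave, giving
\[
g(\bv)\ \ge\ g(\wt)-\frac{1}{2\beta}\log\bigl(1-2\beta\langle\grad g(\wt),\bv-\wt\rangle\bigr).
\]
With $z=2\beta\langle\grad g(\wt),\bv-\wt\rangle$ one has $|z|\le 2\beta DE\le\tfrac14$ by the hypothesis $\beta\le\tfrac{1}{8DE}$, and the inequality actually needed becomes $-\log(1-z)\ge z+\tfrac{1}{4}z^2$, which \emph{does} hold on $[-\tfrac14,\tfrac14]$ (same derivative test: $\frac{1}{1-z}-1-\tfrac{z}{2}=z\bigl(\frac{1}{1-z}-\tfrac12\bigr)$ has the sign of $z$ there). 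Substituting back yields exactly the $\tfrac{\beta}{2}$ quadratic term in \eqref{eq:key_bound}. So the argument is salvageable with this one change, and then coincides with the paper's proof.
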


\begin{proof}
  It is shown in Lemma~3 of \cite{hazan:07a} that, since $f$ is $\alpha$-exp-concave and $2\beta\leq \alpha$,
  \begin{equation}
    \label{eq:key}
    f(\x) - \frac{1}{2\beta}\log\big(1-2\beta \langle\grad f(\x),\y-\x\rangle\big) \leq f(\y).
  \end{equation}  
  By the chain rule, $\grad g(\wt) = \bA^\intercal \grad f(\x)$, and so Eq.~\eqref{eq:key} can be rewritten as
  \begin{equation}
    g(\wt) - \frac{1}{2\beta}\log\big(1-2\beta \langle\grad g(\wt),\bv-\wt\rangle\big) \leq g(\bv).
  \end{equation}
  By construction, $|2\beta \langle\grad g(\wt),\bv-\wt\rangle|\leq \frac{1}{4}$ and the result follows by the reasoning in \cite{hazan:07a}.
\end{proof}

We are now ready to prove the Theorem.

\begin{proof}
  The proof follows the same logic as Theorem~2 in \cite{hazan:07a} after replacing  Lemma~3 there with our Lemma~\ref{lem:hazan}. We omit details, except to show how the setting of Lemma~\ref{lem:hazan} connects to neural networks. Let
  \begin{equation}
    \x^t := \bvsigma_{\text{in}(j)}^t,
    \qquad
    \bpi^t := \bvsigma_{j\leadsto \text{out}}^t
    \qquad\text{and}\qquad
    \bbb^t := \bsigma^{\activ\setminus\{j\}}_\text{out}.
  \end{equation}
  Let the $(d\times m)$-dimensional matrix $\bA^t:= \bpi^t\otimes (\x^t)^\intercal$ denote the outer product. By Lemma~\ref{prop:struc}.2,
  \begin{equation}
    \bvsigma_\text{out}^t = \bA^t\cdot \wt_j^t + \bbb^t.
  \end{equation}
  Since 
  \begin{equation}
    \bA^\intercal \grad\loss = \x^t\cdot \langle\bpi^t,\grad\loss\rangle = \delta_j^t\cdot \x^t,
  \end{equation}
  we have that
  \begin{equation}
    \|\bA^\intercal\grad\loss\|\leq |\delta_j^t|\cdot \|\x^t\|\leq BG,
  \end{equation}
  and the remainder of the argument is standard.
\end{proof}

To the best of our knowledge, $\nprop$ is the first logarithmic-regret algorithm for neural networks. $\nprop$ is computationally more efficient than $2^\text{nd}$ order methods, since it does not require computing the Hessian, and there are efficient ways to iteratively compute $(\bA^t)^{-1}$ without directly inverting the matrix. Nevertheless, $\nprop$'s memory usage and computational complexity are prohibitive \cite{koren:13}; it is worth investigating whether there are more efficient algorithms that achieve logarithmic regret in this setting, for example based on the Sketched Online Newton algorithm proposed by Luo \emph{et al} \cite{luo:16} which has linear runtime. Finally, $\nprop$ does not take advantage of the fact that some experts are inactive on each round, suggesting a second direction in which it may be improved.

\renewcommand*{\algorithmcfname}{Algorithm}
\begin{algorithm}
  \caption{\textsc{$\nprop$ (Newton Backpropagation)}}\label{alg:np}
  \DontPrintSemicolon
  \SetKwInOut{Input}{input}
  \Input{
    $\beta\longleftarrow \frac{1}{2}\min\{\frac{1}{4BGD},\alpha\}$\;    
  } 
  $\bA^0\longleftarrow \frac{1}{\beta^2 D^2} \bI_{d_j}$\;
  Pick $\wt^1$ in $\ccH_j$\;
  \For{rounds $t =1, 2, \ldots$}{
    Input $\x^t$ revealed\;
    \If{\texttt{gate}$(\wt^t,\x^t)$}{
      Backpropagated error $\delta^t$ revealed\;
      $\bA^{t} \longleftarrow \bA^{t-1} + (\delta^t)^2\cdot \x^t\otimes\x^t$\;    
      $\wt^{t+1}  \longleftarrow \proj_{\ccH_j,\bA^{t}}\left(\wt^t - \frac{\delta^t}{\beta} (\bA^t)^{-1}\x^t\right)$\;
    }
  }
\end{algorithm}

\subsection{Conditional computation}
\label{sec:ls}

Convnets are path-sum games played between gated convex players. The criterion for activating a unit is either a $\max$ operator (rectifiers, maxout units, and max-pooling) or random (dropout and dropconnect). These have been shown to work well in practice. It is nevertheless natural to question whether they are \emph{optimal}. This section introduces a framework to tackle the question. Indeed 

Analyzing and optimizing the gates requires a new kind of player, \emph{Conditional Gate (CoG)}, that controls when players are active. Conditional Gate experiences regret about not activing the optimal subset of players. More precisely, a CoG activates a subset of players $F^t\subset F$ on each round. The CoG's context is the weights of the players and their inputs. In $\predg$, a CoG incurs scalar loss $\loss(\bvsigma_\text{out},\y)$. In $\gradg$, a CoG incurs loss vector $\left(\grad \ell_i^t\right)_{i\in F^t}$. 

\addtocounter{algocf}{-1}
\renewcommand*{\algorithmcfname}{Setting}
\begin{algorithm}
  \caption{\textsc{Conditional Gate (CoG)}}\label{alg:sm}
  \DontPrintSemicolon
  \SetKwInOut{Input}{input}
  \Input{
    Set $F$ of players\;
    Function class $\cF\subset\{\varphi:\cC\rightarrow \cP(F)\}$
  } 
  \For{
  rounds $t =1, 2, \ldots$}{
    Weights $(\wt^t_i)_{i\in F}$ and inputs $(\x^t_i)_{i\in F}$ to players are revealed\;
    CoG activates subset $F^t\subset F$ of players\;
    CoG incurs loss $\ell^t(F^t)$\;
  }
\end{algorithm}

The setting is a bandit or partial monitoring since the CoG does not observe what the loss would have been had other players been active. A CoG has access to a class of functions $\cF\subset\{\varphi:\cC\rightarrow \cP(F)\}$ from contexts to sets of players. The CoG's \emph{regret} and \emph{pseudo-regret} are
\begin{align}
  \regret & = \frac{1}{T}\expec\max_{\varphi\in\cF} \left[\sum_{t=1}^T \ell^t(F^t) -\ell^t\big(\varphi(c^t)\big)\right]
  \quad\text{and}\\
  \pregret & = \frac{1}{T}\max_{\varphi\in\cF} \expec\left[\sum_{t=1}^T \ell^t(F^t) -\ell^t\big(\varphi(c^t)\big)\right],
\end{align}
where the expectation is over the CoG's actions (which are stochastic in general).

The next two examples sketch how to generalize rectifier and maxout units using conditional gates.
\begin{eg}[rectifiers]
  Instead of using the $\max$ function to activate a linear unit, equip it with a CoG that decides, on each round, whether or not to activate the unit based on context $(\wt, \x)$. If the unit is not woken, then CoG does not observe the loss. The setting generalizes apple-tasting \cite{helmbold:00} to contextual partial monitoring.
\end{eg}

\begin{eg}[maxout]
  Section~\ref{sec:conv} showed how to model a maxout unit as $k$ players. An adaptive-maxout unit is then $k$ players with a CoG that activates one of them on each round. The CoG is a contextual bandit: players are levers; their weights and inputs are context; the loss $\loss(\bvsigma_\text{out},\y)$ is a scalar that depends on the choice of active player. 
\end{eg}

There are currently no theoretically grounded off-the shelf methods for contextual partial monitoring. However, an efficient contextual bandit algorithm for the stochastic setting is provided in \cite{agarwal:14}. Recent work \cite{bacon:15,bengio:15e} has implemented conditional computation in neural nets using REINFORCE \cite{williams:92} to train the policy. Although the approach has been shown to work in practice, there are currently no performance guarantees available. Developing a solid understanding of conditional computation in neural nets is an important open problem.

\section{Conclusion}

The paper has initiated a game-theoretic analysis of convolutional networks. The key observation is that the nonlinearities found in modern convnets (rectifiers, maxout, max-pooling) are gates that control whether or not the linear operations performed by units contribute to the network's output. Gated games formalize the role of gating. Path-sum games succinctly express the dynamics of convolutional networks. Reformulating error backpropagation as a path-sum game yields the first convergence rates for modern convnets. It also provides a solid foundation on top of which ideas from online convex optimization, game theory and deep learning can be combined.

	\section*{Acknowledgements}
	I am grateful to Jacob Abernethy, Samory Kpotufe and Brian McWilliams for useful conversations.


\end{document}